\documentclass[lettersize,journal]{IEEEtran}
\usepackage{amsmath,amsfonts,bm}
\usepackage{algorithm}
\usepackage{array}
\usepackage{subcaption}
\usepackage{textcomp}
\usepackage{stfloats}
\usepackage{url}
\usepackage{verbatim}
\usepackage{graphicx}
\usepackage{cite}
\usepackage[capitalise]{cleveref}
\crefname{assumption}{Assumption}{Assumptions}
\crefname{problem}{Problem}{Problems}
\usepackage{selinput}\SelectInputMappings{adieresis={ä},germandbls={ß}}
\usepackage{epsfig}
\usepackage{times}
\usepackage{float}
\usepackage{tikz,pgfplots}
\usepackage[english]{babel}
\usepackage{algpseudocode}
\usepackage{verbatim}
\usepackage{makecell}
\usepackage{multirow}
\usepackage{xcolor}
\usetikzlibrary{arrows,shapes,backgrounds,patterns,fadings,matrix,arrows,calc,
	intersections,decorations.markings,
	positioning,arrows.meta}
\usepgfplotslibrary{fillbetween}
\usepgfplotslibrary{statistics}
\pgfplotsset{width=5\columnwidth /5, compat = 1.13,
	height = 60\columnwidth /100, grid= major,
	legend cell align = left, ticklabel style = {font=\scriptsize},
	every axis label/.append style={font=\small},
	legend style = {font={\scriptsize}},title style={yshift=-7pt, font = \small} }

\newtheorem{assumption}{\bf{Assumption}}
\newtheorem{theorem}{\bf{Theorem}}
\newtheorem{corollary}{\bf{Corollary}}
\newtheorem{lemma}{\bf{Lemma}}
\newtheorem{remark}{\bf{Remark}}

\newtheorem{property}{\bf{Property}}
\newtheorem{problem}{\bf{Problem}}

\crefname@preamble{assumption}{Assumption}{Assumptions}
\crefname@preamble{lemma}{Lemma}{Lemmas}
\crefname@preamble{property}{Property}{Properties}

\begin{document}
	
	\title{
		Event-triggered Control and Online Learning for Networked Systems under Computational Delays
	}
	
	\author{
		Xiaobing Dai$^{1}$, Armin Lederer$^{2}$,~\IEEEmembership{Member, IEEE}, Zewen Yang$^{3*}$,~\IEEEmembership{Member, IEEE}, Sihua Zhang$^{1}$, Lu Wan$^{4}$ \\
		Yang Tang$^{5}$,~\IEEEmembership{Fellow, IEEE}, Sandra Hirche$^{1}$,~\IEEEmembership{Fellow, IEEE}
		\thanks{
			$^*$Corresponding author.
		}
		\thanks{
			$^{1}$Xiaobing Dai, Sihua Zhang and Sandra Hirche are with the Chair of Information-oriented Control, School of Computation, Information and Technology, Technical University of Munich, Germany (email: xiaobing.dai, sihua.zhang, hirche@tum.de).
		}
		\thanks{
			$^{2}$Armin Lederer is with the Learning and Adaptive Systems Group, Department of Computer Science, ETH Zurich, Switzerland (e-mail: armin.lederer@inf.ethz.ch).
		}
		\thanks{
			$^{3}$Zewen Yang is with the Chair of Robotics and Systems Intelligence, Munich Institute of Robotics and Machine Intelligence, Technical University of Munich (email: zewen.yang@tum.de).
		}
		\thanks{
			$^{4}$Lu Wan is with Information Systems in the Built Environment, Eindhoven University of Technology, Eindhoven, Netherlands. (email: l.wan@tue.nl).
		}
		\thanks{
			$^{5}$Yang Tang is with the Key Laboratory of Advanced Control and Optimization for Chemical Processes, Ministry of Education, East China
			University of Science and Technology, Shanghai, China (e-mail: tangtany@gmail.com).
		}
	}
	
	\markboth{Journal of \LaTeX\ Class Files,~Vol.~14, No.~8, August~2021}%
	{Shell \MakeLowercase{\textit{et al.}}: A Sample Article Using IEEEtran.cls for IEEE Journals}
	
	
	\maketitle
	
	\begin{abstract}
		Online learning-based control is a promising approach to control uncertain systems, where unknown components are identified during operation to improve control performance.
		However, resource-intensive online learning algorithms introduce non-negligible computational delays, especially when executed on systems with limited local computational resources.
		To mitigate this, an in-network online learning-based control structure is employed by deploying the learning-based controller on a remote computation node and connecting it via a communication channel.
		In this paper, control performance guarantee is first established by deriving tracking error bound for the in-network control architecture, while accounting for computational delays.
		The derived tracking error bound allows for diverse communication and computation strategies under a specific condition, including time-/event-triggered mechanisms.
		Additionally, the trade-off between communication and computation performances is shown for a given desired control performance.
		Furthermore, to enhance the efficiency in both communication and computation, an efficient control framework with an asynchronous event-triggered mechanism in both control and online learning is devised under the existence of computational delay.
		The proposed event-triggered strategy is proven to achieve the same control performance as time-triggered scenario while excluding Zeno behavior.
		Finally, we derive an explicit expression of the proposed event-trigger condition for exponentially stabilizable systems, and demonstrate its effectiveness through simulations.
	\end{abstract}
	
	\begin{IEEEkeywords}
		In-network learning-based control, computational delay, event-triggered online learning, Gaussian process
	\end{IEEEkeywords}
	
	\section{Introduction}
	\label{section_introduction}
	
	With increasing system complexity and high demand of adaptability across various scenarios, control of uncertain dynamics becomes challenging in many applications such as surface vehicles \cite{yang2025safe}, soft robotics \cite{gao2021quasi} and human-centric systems \cite{dai2023fast}.
	To address this challenge, supervised machine learning \cite{burkart2021survey} is increasingly employed to infer unknown components in systems from collected data sets.
	The learned data-driven model is then integrated in model-based controllers, such as computed torque control \cite{nuno2011synchronization} and model predictive control \cite{prajapat2024towards}, to improve control performance \cite{yang2016neural} and enhance robustness \cite{na2020output}.
	
	The strong relationship between the control performance and the inference accuracy of machine learning models is shown in e.g., \cite{lederer2020training}.
	To improve inference performance, online learning is a promising strategy by updating the prediction model using newly collected data \cite{hoi2021online} during the operation.
	However, inference and online model updates impose substantial computational burden and induce significant data storage requirements \cite{liu2020gaussian}.
	High inference and online learning-time compute \cite{zaremba2025trading} prevents the deployment on systems with limited computational resources.
	One option to alleviate the delays from computationally intensive machine learning is to employ an in-network online learning-based control structure, which deploys the learning-based controller on a remote node connected to the local control system via a communication channel \cite{lederer2023gaussian} as shown in \cref{figure_networked_architecture}.
	In this paper, we analyze the performance of such in-network online learning-based control under non-negligible computational delay on the remote node.
	Moreover, a control and online learning strategy is devised via event-triggered mechanism, to achieve high efficiencies in both communication and computation.
	
	\begin{figure}[t]
		\centering
		\includegraphics[width=0.48\textwidth]{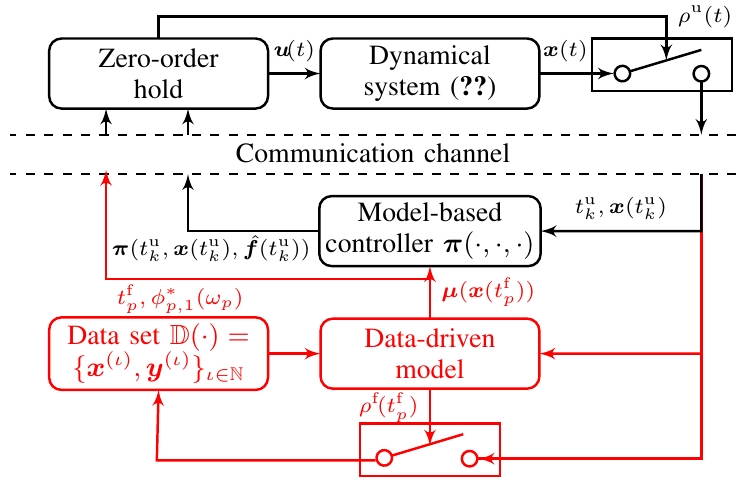}
		\caption{ 
			Block diagram for in-network online learning-based control with event-triggered control and online data collection mechanism.
			The black blocks and lines imply fast computation and communication, while the computation and communication in {\color{red} red blocks and lines} are slow.
		}
		\label{figure_networked_architecture}
	\end{figure}
	
	\subsection{Related Works}
	
	To reduce the computational burden on the local control system, an in-network control framework is adopted \cite{zhang2019networked, rosolia2020multi}, where the computationally intensive controller is executed remotely and communicates with the local plant over a network.
	For such in-network structure, the delay effect induced by limited communication power is well studied with conventional model-based controller using e.g., computed torque control \cite{yang2025safe}, model predictive control \cite{chen2025distributed} and control barrier function \cite{zhang2024learning}.
	For better adaptability of unknown dynamics control, computationally intensive machine learning tools are introduced forming an in-network learning-based control framework, where delays is dominantly caused by computation rather than communication \cite{dai2023can}.
	Unlike the commonly studied communication delays, which are typically modeled as continuous data transmission subject to constant or time-varying shifts \cite{michiels2005stabilization}, computational delays instead introduce a blocking interval during which further operations are suspended while computation is being executed \cite{dai2023can}.
	Only few works \cite{dai2023can, zhao2025probabilistic} investigate the effect from machine learning induced computational delays on in-network control performance, which is specified for feedback linearized systems.
	However, they assume continuous state transmission for feedback control, which is impractical in an in-network setting with constrained communication resources \cite{hu2007sampled}.
	In this study, a comprehensive analyses is provided for in-network learning-based control under computational delays, accounting for practical limitations in communication power.
	Furthermore, the efficiencies of communication and computation are essential for in-network learning-based control, which saves resources and energy.
	To this end, event-triggered control and online learning are promising to achieve high efficiency \cite{heemels2012introduction, umlauft2019feedback}, while preserving a desired control performance.
	
	Event-triggered learning \cite{solowjow2020event} is widely studied when unknown nonlinearities are modeled as a linear combination of known nonlinear features \cite{zou2024novel}, where the trigger law determines the time instances for weight updates.
	However, the specific structure of the unknown component restricts their extension to more general systems.
	Moreover, the absence of theoretical performance guarantees for prediction accuracy with practically unknown feature functions hinders their application in safe control \cite{zhang2022adaptive}.
	Due to the modeling flexibility for continuous unknown functions \cite{williams2006gaussian} and the existence of theoretical prediction error bounds \cite{srinivas2012information, hashimoto2022learning}, event-triggered Gaussian process (GP) based control is increasingly applied to achieve high learning and data efficiency \cite{umlauft2019feedback, hashimoto2020learning}, particularly in safety-critical applications \cite{huang2024learning, zhang2024learning}. 
	Furthermore, the computational delay is also considered in the event-trigger design in \cite{dai2023can, zhao2025probabilistic}, leading to a delay-aware event-triggered online learning strategy.
	However, these event-triggered GP methods typically assume continuous control and evaluation of GP outputs, which is possible only when the learning-based control is deployed on the local system, necessitating significant local computational resources.
	
	The combination of event-triggered control and learning for unknown systems is explored in \cite{yang2025safe}, where the updates for control inputs and machine learning model are triggered simultaneously.
	This synchronous updating approach leads to more frequent model and control updates, resulting in high computational and communication demands.
	While asynchronous dual trigger strategies are investigated in \cite{zhang2021adaptive,zhang2023event}, they merely focus on event-triggered communication within the sensor-controller and controller-actuator channels with known system model.
	Therefore, their methods cannot be directly extended to include event-triggered online learning.
	For asynchronous event-triggered control and online learning, a common issue always exists to determine ``whether to update the controller or machine learning model when control performance is not sufficiently good''.
	Existing dual trigger strategies struggle to address this question, because they assume that the controller and learning model always receive the same information.
	In practice, the difference in computational demands between the machine learning and the controller should be considered.
	Due to the generally significantly higher computation times for the machine learning algorithm, it can only receive new information after the previous computation task \cite{lederer2023gaussian}, while the controller can be considered to continuously receive new data at any time. 
	This information gap between controller and the machine learning model motivates the aforementioned question, and necessitates the design of a new asynchronous event-triggered control and online learning framework explicitly taking computational delays into account.
	To achieve high efficiency of both communication and computation, we focus on the joint design of an event-triggered mechanism for online learning and control within such a fast-slow structure.
	
	\subsection{Contributions}
	
	In this paper, we investigate the networked online learning-based control architecture as shown in \cref{figure_networked_architecture} for a tracking task of general unknown nonlinear systems, where the existence of computational delay from machine learning is considered.
	The unknown components of the systems are predicted using Gaussian process regression, which provides the prediction performance quantification.
	First, the control performance in terms of tracking error bound is analyzed under computational delays and a specific condition for control and online learning strategy, where a trade-off between communication and computation performances is shown for a given desired control performance.
	Moreover, an exemplar control and online learning strategy is provided via time-triggered mechanism to achieve the specific condition in performance analysis.

	Furthermore, to achieve the desired control performance with higher efficiencies in both communication and computation, an asynchronous event-triggered control and online learning framework is devised under computational delay, for which Zeno behavior is excluded.
	Additionally, explicit closed-form expressions for the trigger conditions are presented for exponentially stabilizable systems as in \cite{umlauft2019feedback, jiao2022backstepping}, to demonstrate the practical implementability of our proposed framework.
	The effectiveness of the proposed control and online learning framework is demonstrated via simulations in comparison to different control and online learning strategies.
	
	\subsection{Notation and Structure}
	
	The set of real numbers are denoted as $\mathbb{R}$, and let $\mathbb{R}_+$/$\mathbb{R}_{0,+}$ represents positive/non-negative reals $(0, \infty)$/$[0, \infty)$. 
	Natural numbers are symbolized by $\mathbb{N}$ and $\mathbb{N}_+$ without zero. 
	The $N \times N$ identity matrix is $\bm{I}_N$ with $N \in \mathbb{N}_+$. 
	The trace of a matrix $\bm{A}$ is written as $\mathrm{tr}(\bm{A})$.
	
	The remaining content of this paper is structured as follows. 
	\cref{section_problem} presents the problem setting. 
	In \cref{section_control_performance}, the online learning-based control performance is analyzed.
	The event-triggered control and online learning framework is proposed in \cref{section_double_trigger}.
	\cref{section_simulation} shows the numerical simulations, followed by the conclusion in \cref{section_conclusion}. 	
	
	\section{Problem Setting and Preliminaries}
	\label{section_problem}
	\subsection{System Description and Control Objective}
	\label{subsection_problem}
	
	In this paper, a nonlinear system is considered with the form
	\begin{align} \label{eqn_general_system}
		\dot{\bm{x}}(t) = \underbrace{\bm{h}(\bm{x}(t), \bm{u}(t))}_{\text{known part}} +  \underbrace{\bm{f}(\bm{x}(t))}_{\text{unknown part}},
	\end{align}
	where $\bm{x} \in \mathbb{X} \subset \mathbb{R}^{n}$ with compact domain $\mathbb{X}$ and $\bm{u} \in \mathbb{R}^m$ denote the system states and control inputs with $n, m \in \mathbb{N}_+$, respectively.
	The continuous function $\bm{h}(\cdot,\cdot): \mathbb{X} \times \mathbb{R}^m \to \mathbb{R}^{n}$ encodes all prior knowledge of the system, and therefore is assumed to be known.
	The continuous function $\bm{f}(\cdot) \!=\! [f_1(\cdot), \!\cdots\!, f_n(\cdot)]^T:\! \mathbb{X} \!\to\! \mathbb{R}^n$ with $f_i(\cdot):\! \mathbb{X} \!\to\! \mathbb{R}$ for $i \!=\! 1, \!\cdots\!, n$ includes all uncertainties such as unmodeled components and environmental perturbations, and is considered as unknown.
	
	\begin{remark}
		The system is described in continuous time, which covers a large range of practical systems including mechanical systems, electrical systems, chemical processes and power systems. 
		The framework is directly relevant to digitally implemented control systems. 
		Many physical processes are inherently continuous-time systems, whereas discrete-time models arise from sampling and may not capture inter-sampling behavior. 
		The continuous-time formulation enables rigorous analysis of stability and event-triggered mechanisms, including the effect of delays between events. 
		For implementation on digital platforms, the control input can be applied via a zero-order hold and the triggering condition can be checked at sampling instants, resulting in a sampled-data realization of the proposed method.
	\end{remark}
	
	Moreover, system \eqref{eqn_general_system} satisfies the following assumption.
	
	\begin{assumption} \label{assumption_Lh}
		The known function $\bm{h}(\cdot, \cdot)$ is Lipschitz w.r.t the second input with a known Lipschitz constant $L_h \!\in\! \mathbb{R}_{0,+}$, which means $\| \bm{h}(\bm{x}, \bm{u}_1) - \bm{h}(\bm{x}, \bm{u}_2) \| \le L_h \| \bm{u}_1 - \bm{u}_2 \|$ for any $\bm{x} \in \mathbb{X}$ and any $\bm{u}_1, \bm{u}_2 \in \mathbb{R}^m$.
	\end{assumption}

	Lipschitz continuity is a common assumption in nonlinear control, as Lipschitz continuity of the closed-loop dynamics guarantees the existence of a unique solution of the dynamical system \cite{khalil2015nonlinear}.
	Furthermore, the knowledge of Lipschitz constant $L_h$ is straightforward since $\bm{h}(\cdot, \cdot)$ is known.
	
	\begin{remark}
		In this paper, the process and measurement noise on the system state $\bm{x}$ are neglected, allowing us to focus more on the event-triggered learning and control design.
		For systems with process and measurement noise, we refer to \cite{buisson2021joint,bin2020model}, and the extension of the proposed framework to such cases is left for future work.
	\end{remark}

	The control objective is to track a desired known continuous state trajectory $\bm{x}_r(t)$ with $\bm{x}_r(\cdot): \mathbb{R}_{0,\!+} \!\to\! \mathbb{X}$.
	To this end, a control law $\bm{\pi}(\cdot, \cdot, \cdot): \mathbb{R}_{0,+} \times \mathbb{X} \times \mathbb{R}^n \to \mathbb{R}^m$ is proposed, such that $\bm{\pi}(t, \bm{x}, \hat{\bm{f}}(t))$ employs time $t$, system states $\bm{x}$ and the predicted function $\hat{\bm{f}}(\cdot): \mathbb{R}_{0,+} \to \mathbb{R}^n$ of $\bm{f}(\bm{x}(\cdot))$.
	Moreover, the law $\bm{\pi}(\cdot, \cdot, \cdot)$ is assumed to have the following property.
	
	\begin{assumption} \label{assumption_u}
		There exist well-defined known Lipschitz constants $L_{\pi,t}, L_{\pi,x}, L_{\pi,f} \!\in\! \mathbb{R}_{0,+}$, such that
		\begin{align}
			\| \bm{\pi}(t_1,\bm{x}_1,\bm{f}_{x,1}) - \bm{\pi}(t_2,\bm{x}_1,\bm{f}_{x,1}) \| &\le L_{\pi,t} |t_1 - t_2|, \nonumber \\
			\| \bm{\pi}(t_1,\bm{x}_1,\bm{f}_{x,1}) - \bm{\pi}(t_1,\bm{x}_2,\bm{f}_{x,1}) \| &\le L_{\pi,x} \| \bm{x}_1 - \bm{x}_2 \|, \\
			\| \bm{\pi}(t_1,\bm{x}_1,\bm{f}_{x,1}) - \bm{\pi}(t_1,\bm{x}_1,\bm{f}_{x,2}) \| &\le L_{\pi,f} \| \bm{f}_{x,1} - \bm{f}_{x,2} \| \nonumber
		\end{align}
		for all $t_1, t_2 \in \mathbb{R}_{0,+}$, $\forall \bm{x}_1, \bm{x}_2 \in \mathbb{X}$ and $\forall \bm{f}_{x,1}, \bm{f}_{x,2} \in \mathbb{R}^n$.
		Moreover, the upper bound of $\| \bm{\pi}(t, \bm{x}, \bm{0} ) \|$ is known as $\bar{\pi} \in \mathbb{R}_{0,+}$, i.e., $\| \bm{\pi}(t, \bm{x}, \bm{0} ) \| \le \bar{\pi}$ for all $t \in \mathbb{R}_{0,+}$ and $\forall \bm{x} \in \mathbb{X}$.
	\end{assumption}
	
	Lipschitz continuity in \cref{assumption_u} is commonly found in continuous nonlinear control laws \cite{khalil2015nonlinear}, such as feedback linearization \cite{khalil2015nonlinear} and computed torque control \cite{yang2025safe}.
	Moreover, the bounded control input by $\bar{\pi}$ holds in many real world applications, making \cref{assumption_u} practically non-restrictive.

	Additionally, the law $\bm{\pi}(\cdot,\cdot,\cdot)$ is chosen, such that the controlled system achieves uniform global asymptotic stability with exact system model, i.e., $\bm{u}(t) = \bm{\pi}(t, \bm{x}(t), \bm{f}(\bm{x}(t)))$.
	This emulation-based property is formulated as follows.
	
	\begin{assumption} \label{assumption_u_performance}
		For state $\bm{x}(\cdot)$ generated by system \eqref{eqn_general_system} with arbitrary control input $\bm{u}(\cdot)$, there exists a continuous differentiable function $V(\cdot, \cdot): \mathbb{R}_{0,+} \times \mathbb{R}_{0,+} \to \mathbb{R}_{0,+}$ satisfying\footnotemark
		\begin{align}
			&\alpha_1(e(t)) \le V(t, \bm{x}(t) - \bm{x}_r(t) ) \le \alpha_2(e(t)), \\
			&\dot{V}(t, \bm{x}(t) \!-\! \bm{x}_r(t) ) \le - \alpha( V(t, \bm{x}(t) \!-\! \bm{x}_r(t)) ) + \gamma(\psi(t)),
		\end{align}
		where $e(t) \!=\! \| \bm{x}(t) \!-\! \bm{x}_r(t) \|$, $\psi(t) \!=\! \| \bm{u}(t) \!-\! \bm{\pi}(t, \bm{x}(t), \bm{f}(\bm{x}(t))) \|$ and $\alpha_1(\cdot), \alpha_2(\cdot), \gamma(\cdot) \in \mathcal{K}$, $\alpha(\cdot) \in \mathcal{K}_{\infty}$.
	\end{assumption}
	
	\cref{assumption_u_performance} indicates input-to-state stability, if $\psi(t)$ has a well-defined uniform upper bound.
	Note that the exact value of $\psi(t)$ is unavailable due to the unknown $\bm{f}(\bm{x}(t))$, but its upper bound can be derived as in \cref{section_control_performance}.
	Instead of given specific system and controller structure as in \cite{umlauft2019feedback, jiao2022backstepping}, \cref{assumption_u_performance} only extracts the necessary property of the control law, such that the derived results cover a large range of system classes and nonlinear controllers.
	
	\footnotetext{
	For notational simplicity, denote $V(t) \!=\! V(t, e(t))$ and $\dot{V}(t) \!=\! \dot{V}(t, e(t))$.
	}
	
	To compensate for the unknown part $\bm{f}(\bm{x}(\cdot))$ in \eqref{eqn_general_system}, the estimation $\hat{\bm{f}}(\cdot)$ is applied in $\bm{\pi}(\cdot,\cdot,\cdot)$, which is obtained by the data-driven method introduced in the following subsection.
	
	\subsection{Networked Learning and Control Setting with Delay} 
	\label{subsection_introduction_event_triggered_onlineLearning}
	
	\begin{figure}[t]
		\centering
		\includegraphics[width=0.48\textwidth]{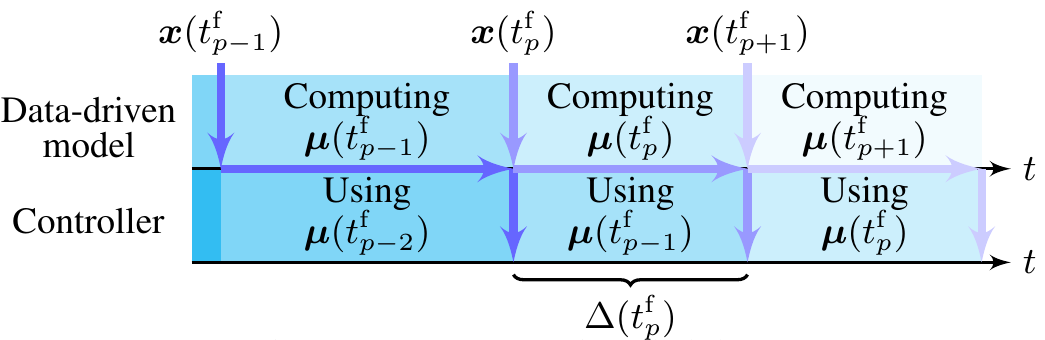}
		\caption{
			Illustration of effects from computational delay.
		}
		\label{figure_computational_delay}
	\end{figure}

	The prediction $\hat{\bm{f}}(t)$ used in $\bm{\pi}(\cdot,\cdot,\cdot)$, serving as the compensation of the unknown component $\bm{f}(\bm{x}(t))$ in \eqref{eqn_general_system}, is obtained through data driven methods from a data set $\mathbb{D}(t)$, which results in a model $\bm{\mu}(\cdot): \mathbb{X} \!\to\! \mathbb{R}^n$ of $\bm{f}(\cdot)$.
	Due to the non-negligible computation time for the evaluation of the data driven model $\bm{\mu}(\cdot)$, the value of $\bm{\mu}(\bm{x}(t))$ is not available at $t$ and therefore cannot be employed as $\hat{\bm{f}}(t)$.
	In practice, a delayed version of the prediction $\bm{\mu}(\bm{x}(\cdot))$ is employed as compensation $\hat{\bm{f}}(\cdot)$ as shown in \cref{figure_computational_delay}, i.e.,
	\begin{align} \label{eqn_hat_f_t_mu}
		\hat{\bm{f}}(t) = \bm{\mu}(\bm{x}(t^{\rm{f}}_p), \mathbb{D}(t^{\rm{f}}_p)), && \forall t \in [t^{\rm{f}}_{p+1}, t^{\rm{f}}_{p+2}),
	\end{align}
	where the time sequence $\{ t^{\rm{f}}_p \}_{p \in \mathbb{N}}$ is defined as
	\begin{align} \label{eqn_time_sequence_tf}
		t^{\rm{f}}_{p+1} = t^{\rm{f}}_p + \Delta(t^{\rm{f}}_p), && \forall p \in \mathbb{N}
	\end{align}
	with $t^{\rm{f}}_0 = 0$.
	The function $\Delta(\cdot): \mathbb{R}_{0,+} \to \mathbb{R}_{0,+}$ denotes the computation time. 
	Specifically $\Delta(t^{\rm{f}}_p)$ is the time required for the machine learning process starting at $t^{\rm{f}}_p$.
	
	\begin{remark}
		Computational delay is an inherent issue in online learning-based control. 
		While faster hardware can reduce latency, the delay introduced by inference and online model updates cannot generally be eliminated, particularly in embedded and mobile platforms. 
		For example, recent online GP acceleration methods require approximately $40$ ms for SkyGP \cite{yang2026streaming} and $200$ ms for LoG-GP \cite{lederer2021gaussian} under a standard laptop configuration \cite{yang2026streaming}. 
		These computation times exceed by orders of magnitude the $1$ ms sampling period of robotic systems operating at $1$ kHz \cite{zhang2020modular}. 
		Hence, computational delay should be explicitly incorporated into the analysis and design of online learning-based control systems.
	\end{remark}
	
	The data set $\mathbb{D}(t^{\rm{f}}_p)$ available at $t^{\rm{f}}_p$, which determines the quality of the data-driven model and affects the prediction accuracy, consists of $N(t^{\rm{f}}_p) = | \mathbb{D}(t^{\rm{f}}_p) |$ data pairs $\{ \bm{x}^{(\iota)}, \bm{y}^{(\iota)} \}$ with $\iota = 1, \cdots, N(t^{\rm{f}}_p)$ satisfying the following assumption.
	
	\begin{assumption} \label{assumption_dataset}
		The data pair $\{ \bm{x}^{(\iota)}, \bm{y}^{(\iota)} \}$ is perturbed by i.i.d sub-Gaussian noise $\bm{w}^{(\iota)}$ with variance proxy $\bm{\Sigma}_w \!=\! \mathrm{diag} (\sigma_{w,1}^2, \!\cdots\!, \sigma_{w,n}^2)$ and $\sigma_{w,i} \!\in\! \mathbb{R}_+$ for all $i \!=\! 1, \!\cdots\!, n$, i.e., $\bm{y}^{(\iota)} \!=\! [y_1^{(\iota)}, \!\cdots\!, y_n^{(\iota)}]^T = \bm{f}(\bm{x}^{(\iota)}) + \bm{w}^{(\iota)}$.
		Moreover, the data pair $\{ \bm{x}(t^{\rm{f}}_p), \bm{y}(t^{\rm{f}}_p) \}$ is available at $t^{\rm{f}}_p$ for all $p \in \mathbb{N}$.
	\end{assumption}
	
	\cref{assumption_dataset} admits noisy measurements $\bm{y}$ of $\bm{f}(\cdot)$, which is usually calculated as $\bm{y} = \dot{\hat{\bm{x}}} - \bm{h}(\bm{x}, \bm{u})$ with the approximated derivative $\dot{\hat{\bm{x}}}$ of $\dot{\bm{x}}$ from e.g., finite difference \cite{yin2023learning}.
	Moreover, it also allows the collection of new data pairs $\{ \bm{x}(t^{\rm{f}}_p), \bm{y}(t^{\rm{f}}_p) \}$ at $t^{\rm{f}}_p$, enabling online learning.
	
	The data set $\mathbb{D}(\cdot)$ of the data-driven methods are crucial for inference accuracy \cite{williams2006gaussian}, which also affects the control performance \cite{lederer2020training}.
	Due to online available data pairs in \cref{assumption_dataset}, it is possible to update the data-driven model during operation by adding new samples into $\mathbb{D}(\cdot)$.
	This data collection process is also called online learning or model update in this paper.
	While online learning is a promising method to enhance prediction accuracy, it requires more computation resources and induces additional processing time \cite{hoi2021online}.
	To analyze the delay effects additionally induced by online learning, each computation time $\Delta(t^{\rm{f}}_p)$ is separated as
	\begin{align}
		\Delta(t^{\rm{f}}_p) = \Delta^{\rm{p}}(t^{\rm{f}}_p) + \Delta^{\rm{u}}(t^{\rm{f}}_p), && \forall p \in \mathbb{N},
	\end{align}
	where $\Delta^{\rm{p}}(\cdot): \mathbb{R}_{0,+} \to \mathbb{R}_{0,+}$ and $\Delta^{\rm{u}}(\cdot): \mathbb{R}_{0,+} \to \mathbb{R}_{0,+}$ return the computation times for \underline{p}rediction of $\bm{\mu}(\bm{x}(\cdot))$ and model \underline{u}pdate related to $\mathbb{D}(\cdot)$, respectively.
	Moreover, the computation times satisfy the following assumption.
	\begin{assumption} \label{assumption_computational_time}
		The computation time $\Delta^{\rm{p}}(\cdot)$ and $\Delta^{\rm{u}}(\cdot)$ are bounded by known constants $\bar{\Delta}^{\rm{p}}, \bar{\Delta}^{\rm{u}} \!\in\! \mathbb{R}_+$, i.e., $\Delta^{\rm{p}}(t^{\rm{f}}_p) \!\le\! \bar{\Delta}^{\rm{p}}$ and $\Delta^{\rm{u}}(t^{\rm{f}}_p) \!\le\! \bar{\Delta}^{\rm{u}}$ for all $p \!\in\! \mathbb{N}$.
	\end{assumption}
	Since the bounded computation times only require the termination of the inference and model update algorithm after a finite number of mathematical operations, \cref{assumption_computational_time} is satisfied by most commonly used data-driven methods.
	Therefore, this assumption poses no severe restriction.
	
	\begin{remark}
		In this paper, we focus on the computational delay induced by online Gaussian process inference and model updating. 
		To keep the notation concise and to highlight the main contributions on event-triggered control and online learning under computational delay, possible communication-induced delays are not explicitly modeled. 
		This simplification is reasonable in the considered setting because the computational delay of online learning can be dominant. 
		For instance, even recently developed real-time online GP methods, such as SkyGP, report computation times of approximately $40$ ms \cite{yang2026streaming}, whereas communication delays in low-latency networked control implementations can be on the order of $1$ ms and may be further reduced by wired communication. 
		Nevertheless, communication effects may become relevant in certain applications, especially in wireless or mobile systems. 
		The joint treatment of computational delay, communication-induced delay, packet losses, and other network-related issues such as cyber-attacks constitutes an important direction for future research.
	\end{remark}
	
	\begin{remark}
		In this work, a deterministic computation time bound is adopted to simplify the notation, allowing the focus to remain on performance analysis and event-trigger design.
		In practice, the computation delays $\Delta^{\mathrm{p}}(\cdot)$ and $\Delta^{\mathrm{u}}(\cdot)$ for prediction and GP model update may occasionally exceed their nominal upper bounds $\bar{\Delta}^{\mathrm{p}}$ and $\bar{\Delta}^{\mathrm{u}}$ due to unexpected computational bottlenecks.
		To capture this, \cref{assumption_computational_time} can be relaxed to a probabilistic form, where the computation delays are modeled as random variables following certain distributions.
		Consequently, the performance analysis should incorporate such violation probability.
	\end{remark}
	
	\begin{remark}
		Under the bounded data set size $\bar{N}$ and rank-one Cholesky updates, GP prediction and online update have complexity $\mathcal{O}(\bar{N}^2)$. 
		Nevertheless, the associated computation-induced delay remains non-negligible for real-time control. 
		Indeed, accelerated online GP methods that retain prediction error guarantees, such as SkyGP \cite{yang2026streaming} and LoG-GP \cite{lederer2021gaussian}, still require computation times of tens of milliseconds or more. 
		This can exceed the allowable time budget in high-frequency robotic systems, e.g., $1$ ms per cycle for a $1000$ Hz controller of Franka. 
		Therefore, it is necessary to explicitly account for computational delays in the learning-based control framework.
	\end{remark}
	
	Considering the high computational load from data-driven methods, a networked architecture for learning-based control is applied, separating the computationally heavy learning-based controller from the plant through communication network as shown in \cref{figure_networked_architecture}.
	For practical digital communication channel, the transmission is discontinuous and only conducted at discrete times $\{ t^{\rm{u}}_k \}_{k \in \mathbb{N}}$, such that the control input received on the local plant with a zero-order hold filter is written as
	\begin{align} \label{eqn_u}
		\bm{u}(t) = \bm{\pi}(t^{\rm{u}}_k, \bm{x}(t^{\rm{u}}_k), \hat{\bm{f}}(t^{\rm{u}}_k)), && \forall t \in [t^{\rm{u}}_k, t^{\rm{u}}_{k+1})
	\end{align}
	for all $k \in \mathbb{N}$ with $t^{\rm{u}}_0 = 0$.
	The time sequence $\{ t^{\rm{u}}_k \}_{k \in \mathbb{N}}$ is determined later from different control strategies, and the transmission interval $t^{\rm{u}}_{k+1} - t^{\rm{u}}_k$ is lower bounded by a strictly positive constant denoted as $\varepsilon \in \mathbb{R}_+$, whose value relies on the power of the communication channel inducing $t^{\rm{u}}_{k+1} - t^{\rm{u}}_k \ge \varepsilon$.

	\subsection{Problem Formulation}
	
	In this paper, we first investigate the performance of in-network online learning-based control, which is characterized by the upper bound of the tracking error $e(\cdot)$ defined in \cref{assumption_u_performance} and is formulated as follows.
	\begin{problem} \label{problem_performance}
		Considering the in-network online learning-based control for unknown system \eqref{eqn_general_system}, which satisfies a specific condition for the applied control and online learning strategy.
		Derive the tracking error bound $\bar{e}(\cdot)$, such that
		\begin{align}
			e(t) \le \bar{e}(t), && \forall t \in \mathbb{R}_{0,+}
		\end{align}
		holds with a high probability and $\bar{e}(\cdot): \mathbb{R}_{0,+} \to \mathbb{R}_{0,+}$.
	\end{problem}
	
	While the sufficient condition for achiving the boundness by $\bar{e}(\cdot)$ allows multiple control and online learning strategies, a smart control and online learning strategy is necessary for high efficiencies in both communication and computation.
	Specifically, the event-triggered mechanism for both control and online learning is devised under computational delay, taking the derived tracking error bound $\bar{e}(t)$ as a baseline performance.
	Such event-trigger determines the times to update control input $\bm{u}$ in \eqref{eqn_general_system} and machine learning model by adding new data points to $\mathbb{D}$ in \eqref{eqn_hat_f_t_mu} asynchronously, whose design problem is formally formulated as follows.
	
	\begin{problem} \label{problem_trigger}
		Consider an unknown system \eqref{eqn_general_system} and let $\bar{e}(\cdot)$ be the desired tracking error bound.
		Design a Zeno-free event-triggered framework for control and online learning under computational delay $\Delta(\cdot)$, such that the tracking error $e(t)$ is uniformly bounded by $\bar{e}(t)$, $\forall t \in \mathbb{R}_{0,+}$ with a high probability.
	\end{problem}
	
	In the following paper, the control performance for \cref{problem_performance} is analyzed in \cref{section_control_performance}, and the event-trigger design addressing \cref{problem_trigger} is shown in \cref{section_double_trigger}.

	\section{In-networked Online Learning-based Control}
	\label{section_control_performance}
	
	In this section, the control performance analysis for in-network online learning-based control under computational delay is analyzed.
	First, Gaussian process regression is introduced in \cref{subsection_GP} with prediction error quantification.
	Then, the control performance in terms of a tracking error bound $\bar{e}(\cdot)$ is determined under a sufficient condition on control and online learning performance in \cref{subsection_control_performance}.
	Next, \cref{subsection_time_trigger} shows an exemplar control and online strategy is shown to satisfy this condition via time-triggered mechanism.
	
	\subsection{Gaussian Process Regression}
	\label{subsection_GP}
	
	Gaussian process regression, regarded as a Bayesian method, is commonly used for unknown function inference by inducing a Gaussian distribution over functions.
	In this subsection, we consider the inference of a scalar unknown function $f_i(\cdot)$ from the $i$-th dimension of $\bm{f}(\cdot)$ with $i = 1, \cdots, n$, whose prior knowledge is characterized by the prior mean function $m_i(\cdot): \mathbb{X} \to \mathbb{R}$ and kernel function $\kappa_i(\cdot, \cdot): \mathbb{X} \times \mathbb{X} \to \mathbb{R}_{0,+}$, i.e., $f_i \sim \mathcal{GP}(m_i, \kappa_i)$.
	The mean function is set as $m_i(\cdot) = 0$ in this paper, since all prior knowledge in \eqref{eqn_general_system} is encoded in $\bm{h}(\cdot)$.
	The kernel function $\kappa_i(\cdot, \cdot)$ reflects the covariance between data pairs satisfying the following assumption.
	
	\begin{assumption} \label{assumption_GP}
		The unknown function $f_i$ belongs to the reproducing kernel Hilbert space (RKHS) $\mathcal{H}_{\kappa_i}$ corresponding to the Lipschitz stationary kernel $\kappa_i(\cdot, \cdot)$ and has a bounded RKHS norm, i.e., $\| f_i \|_{\kappa_i} = \sqrt{\langle f_i, f_i \rangle_{\kappa_i}} \le \Gamma_i$ with $\Gamma_i \in \mathbb{R}^+$ for all $f_i \in \mathcal{H}_{\kappa_i}$.
		The Lipschitz constant for kernel is denoted by $L_{\kappa,i} \in \mathbb{R}_+$ and induced by the Euclidean norm, i.e., $\| \kappa_i(\bm{x}, \bm{x}_1) - \kappa_i(\bm{x}, \bm{x}_2) \| \le L_{\kappa,i} \| \bm{x}_1 - \bm{x}_2 \|$ for all $\bm{x}, \bm{x}_1, \bm{x}_2 \in \mathbb{X}$.
		Moreover, it has $\kappa_i(\bm{x}, \bm{x}) = \sigma_{f,i}^2$ for all $\bm{x} \in \mathbb{X}$.
	\end{assumption}
	
	The RKHS is a large function space including e.g., all analytic functions for the squared exponential kernel \cite{williams2006gaussian}.
	The RKHS norm represents the smoothness of $f_i(\cdot)$ uniquely characterized by the kernel $\kappa_i(\cdot, \cdot)$.
	Moreover, the Lipschitz continuity of $\kappa_i(\cdot, \cdot)$ only requires the kernel to be continuous on the compact domain $\mathbb{X} \times \mathbb{X}$, which is satisfied by most commonly used kernel, such as squared exponential kernel.
	
	\begin{remark}
		The bounded RKHS norm assumption is widely used to obtain certified prediction error bounds in Gaussian process regression \cite{srinivas2012information, hashimoto2022learning}.
		It characterizes the smoothness or complexity of the unknown function with respect to the chosen kernel.
		Since the exact RKHS norm is generally unavailable in practice \cite{tokmak2024pacsbo, tokmak2025safe}, it must be conservatively upper-bounded to preserve the stated learning and control guarantees.
		This may lead to conservative learning and tracking error bounds and may increase the number of triggering events, thereby reducing communication and computational efficiency. 
		Nevertheless, such conservatism does not invalidate the derived tracking error guarantee.
		The proposed control law does not depend on the RKHS norm bound, and the bound is used only to construct the certified prediction error estimate required for the performance analysis and event-trigger design. 
		Finding tighter certified upper bounds on the RKHS norm remains an open problem and is outside the scope of this work. 
		More generally, the proposed computational-delay-aware event-triggered control and online learning framework requires a valid prediction error bound, but not necessarily the specific RKHS-based bound adopted here. 
		Hence, the framework can be combined with other uncertainty quantification methods, provided that they yield valid uniform or closed-loop trajectory-wise prediction error bounds with suitable continuity properties.
	\end{remark}
	
	Given the prior information $m_i(\cdot)$, $\kappa_i(\cdot,\cdot)$ for $i = 1, \cdots, n$ and the data set $\mathbb{D}$ with $N = |\mathbb{D}|$ data pairs, the prediction of the unknown function $\bm{f}(\cdot)$ at the inquiry point $\bm{x} \in \mathbb{X}$ is written as a Gaussian distribution, i.e., $\bm{f}(\bm{x}) \sim \mathcal{N}(\bm{\mu}(\bm{x}, \mathbb{D}), \bm{\Sigma}(\bm{x}, \mathbb{D}))$.
	It is characterized by the posterior mean $\bm{\mu}(\bm{x}, \mathbb{D}) = [\mu_1(\bm{x}, \mathbb{D}), \cdots, \mu_n(\bm{x}, \mathbb{D})]^T$ and variance $\bm{\Sigma}(\bm{x}, \mathbb{D}) = \mathrm{diag} (\sigma_1^2(\bm{x}, \mathbb{D}), \cdots, \sigma_n^2(\bm{x}, \mathbb{D}))$ with
	\begin{align} \label{eqn_GP_prediction}
		&\mu_i(\bm{x}, \mathbb{D}) = \bm{\kappa}_{i,\mathbb{D}}^T(\bm{x}) (\bm{K}_{i,\mathbb{D}} + \sigma_{w,i}^2 \bm{I}_{N})^{-1} \bm{y}_{i,\mathbb{D}}, \\
		&\sigma_i^2(\bm{x}, \mathbb{D}) = \kappa_i(\bm{x}, \bm{x}) - \bm{\kappa}_{i,\mathbb{D}}^T(\bm{x}) (\bm{K}_{i,\mathbb{D}} + \sigma_{w,i}^2 \bm{I}_{N})^{-1} \bm{\kappa}_{i,\mathbb{D}}(\bm{x}), \nonumber
	\end{align}
	where $\bm{\kappa}_{i, \mathbb{D}}(\cdot) = [\kappa_i(\bm{x}^{(1)}, \cdot), \cdots, \kappa_i(\bm{x}^{(N)}, \cdot)]^T$, $\bm{K}_{i,\mathbb{D}} = [\kappa_i(\bm{x}^{(p)}, \bm{x}^{(q)})]_{p,q = 1, \cdots, N}$ and $\bm{y}_{i, \mathbb{D}} = [y_i^{(1)}, \cdots, y_i^{(N)}]^T$ for $i = 1,\cdots, n$.
	The posterior mean $\bm{\mu}(\cdot, \cdot)$ serves as the prediction of $\bm{f}(\cdot)$ and the posterior variance $\bm{\Sigma}(\cdot, \cdot)$ is used to estimate the prediction performance with the following lemma.
	
	\begin{lemma} \label{lemma_GP_error_bound}
		Let \cref{assumption_dataset,assumption_GP} hold for a data set $\mathbb{D}$, that includes a bounded number of training samples, i.e., $| \mathbb{D}(t^{\rm{f}}_p) | = N(t^{\rm{f}}_p) \le \bar{N}$ for $ \forall p \in \mathbb{N}$.
		Choose $\delta \in (0,1)$, then the prediction error for all $p \!\in\! \mathbb{N}$ and $\bm{x}(t^{\rm{f}}_p) \!\in\! \mathbb{X}$ is bounded by
		\begin{align} \label{eqn_GP_error_bound}
			\| \bm{\mu}(\bm{x}(t^{\rm{f}}_p), \mathbb{D}(t^{\rm{f}}_p)) - \bm{f}(\bm{x}(t^{\rm{f}}_p)) \| &\le \eta(\bm{x}(t^{\rm{f}}_p), \mathbb{D}(t^{\rm{f}}_p))
		\end{align}
		with a probability of at least $1 - \delta/2$, where $\eta(\bm{x}, \mathbb{D}) = \sqrt{\mathrm{tr} (\bm{\mathcal{B}} \bm{\Sigma}(\bm{x}, \mathbb{D}))}$, $\bm{\mathcal{B}} \!=\! \mathrm{diag} (\bar{\beta}_{1}^2, \cdots, \bar{\beta}_{n}^2)$ with $\bar{\beta}_{i} \!=\! \Gamma_i + \sigma_{w,i} \sqrt{2 \mathcal{I}_i + 1 + \log(2n / \delta)}$ for $i = 1, \cdots, n$.
		The scalar $\mathcal{I}_i$ denotes the maximal mutual information obtained by $\bar{N}$ noisy samples, i.e., $\mathcal{I}_i = \max_{\bm{z}^{(1)}, \cdots, \bm{z}^{(\bar{N})} \in \mathbb{X}} | \bm{I}_{\bar{N}} + \sigma_{w,i}^{-2} \bm{K}_{i,z}| / 2$ with $\bm{K}_{i,z} = [\kappa_i(\bm{z}^{(p)}, \bm{z}^{(q)})]_{p,q = 1, \cdots, \bar{N}}$.
	\end{lemma}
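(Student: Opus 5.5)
The plan is to reduce the statement to the standard uniform RKHS confidence bound for GP regression with sub-Gaussian noise, namely Chowdhury--Gopalan type results as used in \cite{srinivas2012information}. After that, two extra steps are needed: a union bound over the $n$ output dimensions, and an argument that the bound holds uniformly over all $p\in\mathbb{N}$ despite online, state-dependent data collection.

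\textbf{Per-dimension bound.} Fix a dimension $i$. Under \cref{assumption_dataset,assumption_GP}, the self-normalized martingale argument gives the following. With probability at least $1-\delta_i$, simultaneously for all data sets generated along the (adapted) sampling sequence and for all $\bm{x}\in\mathbb{X}$,
\begin{align*}
|\mu_i(\bm{x},\mathbb{D})-f_i(\bm{x})|\le \Big(\Gamma_i+\sigma_{w,i}\sqrt{2\gamma_{N}+1+\log(1/\delta_i)}\Big)\sigma_i(\bm{x},\mathbb{D}),
\end{align*}
where $\gamma_N$ is the information gain of the collected points. The inputs $\bm{x}(t^{\rm f}_p)$ are chosen based on past noisy observations through the closed loop. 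This is exactly the filtration-adapted setting of that theorem, since each new noise $\bm{w}^{(\iota)}$ is conditionally sub-Gaussian given the past. Consequently the bound is anytime and uniform in $p$.

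\textbf{Data-independent constant.} Since $N(t^{\rm f}_p)\le\bar N$, the information gain of any collected set is dominated by its maximum over $\bar N$ points, which is $\mathcal{I}_i$. If the data set is a sliding or replaced buffer rather than a growing one, I would apply the bound to each realised set. I would then argue uniformity through the maximal-information constant, so that $\bar\beta_i$ with $\mathcal{I}_i$ works for all $p$. Choosing $\delta_i=\delta/(2n)$ produces the $\log(2n/\delta)$ term.

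\textbf{Combining dimensions.} A union bound over $i=1,\dots,n$ gives joint validity with probability at least $1-\delta/2$. On this event, summing squares gives
\begin{align*}
\|\bm{\mu}-\bm{f}\|^2=\sum_i|\mu_i-f_i|^2\le\sum_i\bar\beta_i^2\sigma_i^2=\mathrm{tr}(\bm{\mathcal{B}}\bm{\Sigma}),
\end{align*}
which is \eqref{eqn_GP_error_bound}.

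\textbf{Main obstacle.} The delicate step is justifying uniformity over all $p$ when data may be removed and the data set is not a monotone sequence. The standard theorem is stated for growing data sequences. I would first try to embed each $\mathbb{D}(t^{\rm f}_p)$ as a subsequence-selected data set and invoke a version of the confidence bound valid for arbitrary adapted subsets of bounded size. If that fails, I would fall back to assuming monotone growth up to $\bar N$, which the data-collection rule may ensure.
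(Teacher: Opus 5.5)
Your proposal is correct and follows essentially the same argument as the paper. It applies a dimension-wise RKHS/sub-Gaussian confidence bound at level $1-\delta/(2n)$, caps the information gain of any data set with at most $\bar{N}$ samples by $\mathcal{I}_i$, takes a union bound over the $n$ outputs, and sums the squared componentwise errors to obtain $\sqrt{\mathrm{tr}(\bm{\mathcal{B}}\bm{\Sigma})}$. The uniformity-in-$p$ issue under data forgetting that you flag is a real subtlety, but the paper (its appendix proof is not shown here, so this is inferred from the statement and its citations) appears not to treat it separately: it simply applies the standard bound with the information gain capped through $N(t^{\rm{f}}_p)\le\bar{N}$.
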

	\begin{IEEEproof}
		See appendix.
	\end{IEEEproof}
	
	\cref{lemma_GP_error_bound} provides a probabilistic upper error bound for Gaussian process prediction. 
	The constant matrix $\bm{\mathcal{B}}$ requiring solving the maximization problem can be obtained offline, such that the evaluation of error bound $\eta(\cdot,\cdot)$ during operation is only related to the value of posterior variance $\bm{\Sigma}(\cdot,\cdot)$.
	
	Additionally, we consider the prediction performance after an online model update, i.e., after adding a new sample $\{ \bm{x}(t^{\rm{f}}_p), \bm{y}(t^{\rm{f}}_p) \}$ into the training data set at $t^{\rm{f}}_p$.
	Denote $\bm{\mu}^+(\bm{x}(t^{\rm{f}}_p))$ and $\eta^+(\bm{x}(t^{\rm{f}}_p))$ as the posterior mean in \eqref{eqn_GP_prediction} and error bound in \eqref{eqn_GP_error_bound} after online model update with the new data set, which satisfy the following corollary.
	\begin{corollary} \label{corollary_online_GP_error_bound} 
		Let all assumptions in \cref{lemma_GP_error_bound} hold and assume $\{ \bm{x}(t^{\rm{f}}_p), \bm{y}(t^{\rm{f}}_p) \} \in \mathbb{D}(t^{\rm{f}}_p)$ and $N(t^{\rm{f}}_p) \le \bar{N}$.
		Choose $\delta \in (0,1)$, then the prediction error at $\bm{x}(t^{\rm{f}}_p) \in \mathbb{X}$ for all $p \in \mathbb{N}$ is probabilistically bounded by
		\begin{align} \label{eqn_eta_underline}
			\| \bm{f}(\bm{x}(t^{\rm{f}}_p)) \!-\! \bm{\mu}^+\!(\bm{x}(t^{\rm{f}}_p)) \| \!\le\! \eta^+\!(\bm{x}(t^{\rm{f}}_p)) \!\le\! \underline{\eta} \!\!=\!\! (\mathrm{tr} (\bm{\mathcal{B}} \bm{\Sigma}_w))^{\!\frac{1}{2}}
		\end{align}
		with probability of at least $1 \!-\! \delta/2$, where $\eta^+(\bm{x}(t^{\rm{f}}_p)) = (\mathrm{tr} (\bm{\mathcal{B}} (\bm{\Sigma}(\bm{x}(t^{\rm{f}}_p)) + \bm{\Sigma}_w^{-1}) ) )^{1/2}$ with $\bm{\Sigma}_w$ in \cref{assumption_dataset}\footnotemark.
	\end{corollary}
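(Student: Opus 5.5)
The plan is to reduce the corollary to \cref{lemma_GP_error_bound} applied to the updated data set, and then bound the updated posterior variance at the newly added input by the noise variance. Write $\mathbb{D}^+(t^{\rm{f}}_p)$ for the data set after the online update, so that $\bm{\mu}^+$ and $\eta^+$ are the posterior mean \eqref{eqn_GP_prediction} and the bound \eqref{eqn_GP_error_bound} computed from $\mathbb{D}^+$. By hypothesis $\{\bm{x}(t^{\rm{f}}_p),\bm{y}(t^{\rm{f}}_p)\}$ belongs to this set and its size is at most $\bar{N}$.

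\textbf{Step 1: the first inequality.} Because $|\mathbb{D}^+| \le \bar{N}$, the constants $\bar{\beta}_i$ in $\bm{\mathcal{B}}$ are unchanged: they depend only on $\bar{N}$ through $\mathcal{I}_i$. Since that result is uniform over data sets of bounded size, \cref{lemma_GP_error_bound} applies directly with $\mathbb{D}^+$ in place of $\mathbb{D}$. It gives $\|\bm{f}(\bm{x}(t^{\rm{f}}_p))-\bm{\mu}^+(\bm{x}(t^{\rm{f}}_p))\|\le \eta^+(\bm{x}(t^{\rm{f}}_p))$ with probability at least $1-\delta/2$, where $\eta^+ = (\mathrm{tr}(\bm{\mathcal{B}}\bm{\Sigma}^+))^{1/2}$ and $\bm{\Sigma}^+$ is the posterior covariance at $\bm{x}(t^{\rm{f}}_p)$ under $\mathbb{D}^+$. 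No new probabilistic argument is needed.

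\textbf{Step 2: the deterministic variance bound.} This is the key step. For each output dimension $i$, I would use the recursive (rank-one) GP update. Partition the Gram matrix of $\mathbb{D}^+$ into the old points and the new point, and apply the Schur complement / block-inverse formula. Conditioning on one extra noisy observation at the query point itself gives
\begin{align*}
\sigma_i^{2,+}(\bm{x}) = \sigma_i^2(\bm{x}) - \frac{\sigma_i^4(\bm{x})}{\sigma_i^2(\bm{x})+\sigma_{w,i}^2} = \Big(\sigma_i^{-2}(\bm{x})+\sigma_{w,i}^{-2}\Big)^{-1},
\end{align*}
where $\sigma_i^2(\bm{x})$ is the variance before adding the point. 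I read the expression for $\eta^+$ in the statement as this harmonic combination, i.e. $(\bm{\Sigma}^{-1}+\bm{\Sigma}_w^{-1})^{-1}$ taken componentwise.

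If the point was already in $\mathbb{D}(t^{\rm{f}}_p)$ before the update, a more direct argument works. Posterior variance is monotonically nonincreasing when data are added. So I would compare with the GP conditioned only on that single noisy sample, which gives $\sigma_{f,i}^2\sigma_{w,i}^2/(\sigma_{f,i}^2+\sigma_{w,i}^2)\le\sigma_{w,i}^2$.

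\textbf{Step 3: conclusion.} In either case $\sigma_i^{2,+}(\bm{x}(t^{\rm{f}}_p))\le \min\{\sigma_i^2,\sigma_{w,i}^2\}\le \sigma_{w,i}^2$. Because $\bm{\mathcal{B}}$ is diagonal with nonnegative entries, $\mathrm{tr}(\bm{\mathcal{B}}\bm{\Sigma}^+)=\sum_i\bar{\beta}_i^2\sigma_i^{2,+}\le \sum_i \bar{\beta}_i^2\sigma_{w,i}^2=\mathrm{tr}(\bm{\mathcal{B}}\bm{\Sigma}_w)$. Taking square roots gives $\eta^+\le\underline{\eta}$, completing \eqref{eqn_eta_underline}.

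The main obstacle is the block-inverse computation in Step 2. It requires careful bookkeeping of $(\bm{K}_{i,\mathbb{D}^+}+\sigma_{w,i}^2\bm{I})^{-1}$ to extract the scalar Schur complement $\sigma_i^2(\bm{x})+\sigma_{w,i}^2$. An alternative route is the monotonicity argument: express the variance as a minimum of a quadratic form over linear predictors. Either route is routine once set up.
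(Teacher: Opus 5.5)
Your proposal is correct and follows essentially the paper's route: the first inequality is \cref{lemma_GP_error_bound} applied to the data set that already contains the new pair, and the second comes from the rank-one (Schur-complement) update of the posterior variance at the added input, $\sigma_i^{2,+}=(\sigma_i^{-2}+\sigma_{w,i}^{-2})^{-1}\le\sigma_{w,i}^2$, summed with the nonnegative diagonal weights of $\bm{\mathcal{B}}$. Your harmonic reading of the garbled expression for $\eta^+$ is the intended one, and the case split in Step~2 is unnecessary: any data set containing the pair can be viewed as the remaining points plus that pair, so the rank-one formula alone covers both cases, including when forgetting precedes the update.
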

	\begin{IEEEproof}
		See appendix.
	\end{IEEEproof}
	
	\cref{corollary_online_GP_error_bound} bounds the prediction error at $\bm{x}(t^{\rm{f}}_p)$ after online GP model update, which is independent of the complete data set $\mathbb{D}(t^{\rm{f}}_p)$ since $\{ \bm{x}(t^{\rm{f}}_p), \bm{y}(t^{\rm{f}}_p) \} \in \mathbb{D}(t^{\rm{f}}_p)$.
	The positive constant $\underline{\eta}$ acts as the upper bound of the prediction error after online model update, which can be used as the desired inference performance in event-triggered online learning \cite{umlauft2019feedback}.
	
	\footnotetext{
	When $\bm{x}(\cdot)$ and $\mathbb{D}(\cdot)$ have identical timestamp when determining $\bm{\mu}(\cdot,\cdot)$, $\bm{\Sigma}(\cdot,\cdot)$ and $\eta(\cdot,\cdot)$, the notation is simplified by dropping the dependency on $\mathbb{D}$, i.e.,  $\bm{\mu}(\bm{x}(t^{\rm{f}}_p)) \!=\! \bm{\mu}(\bm{x}(t^{\rm{f}}_p), \mathbb{D}(t^{\rm{f}}_p))$.
	}
	
	Note that the bounded number of training samples by $\bar{N}$ required in \cref{lemma_GP_error_bound,corollary_online_GP_error_bound} can be realized by data forgetting strategies.
	A straightforward approach is to delete the oldest data point in the data set, as suggested in \cite{nguyen2008local}.
	Recently, some advanced data deletion methods are developed, such as removing the data point that is farthest from the current query input \cite{yang2025streaming}, or deleting the data that contributes the least to the overall information entropy \cite{umlauft2020smart}.
	It is noted that the results in \cref{corollary_online_GP_error_bound} hold if the forgetting is executed before adding a new data pair.
	Moreover, the bounded size of the data set induces bounded computation time for GP predictions and online model updates \cite{williams2006gaussian}, making \cref{assumption_computational_time} reasonable.
	
	\begin{remark}
		There exist several advanced Gaussian process variants to reduce computational complexity, such as local GP \cite{nguyen2008local}, distributed GP \cite{yang2025streaming} and spase GP \cite{ranganathan2010online}.
		Note that local GP \cite{nguyen2008local} and distributed GP \cite{yang2025streaming} utilize subsets of the data, such that the prediction error bounds in \cref{lemma_GP_error_bound} and \cref{corollary_online_GP_error_bound} are preserved.
		Consequently, the proposed event-triggered control and learning framework can be directly applied to them.
		In contrast, sparse GP \cite{ranganathan2010online} relies on approximate posterior representations with inducing points.
		While computationally efficient, theoretical prediction error bounds for sparse GP remain open, making the direct integration more difficult.
	\end{remark}


	\subsection{Performance of Networked Learning-based Control}
	\label{subsection_control_performance}
	
	In this subsection, the performance of in-networked learning-based control under computational delay is analyzed by observing the closed-loop dynamics, which denotes
	\begin{align} \label{eqn_closed_loop_system}
		\dot{\bm{x}}(t) = \bm{h}(\bm{x}(t), \bm{\pi}(t^{\rm{u}}_k, \bm{x}(t^{\rm{u}}_k), \bm{\mu}(\bm{x}(t^{\rm{f}}_p)))) + \bm{f}(\bm{x}(t))
	\end{align}
	for $t \!\in\! [t^{\rm{u}}_k,\! t^{\rm{u}}_{k\!+\!1})$ and $t^{\rm{u}}_k \!\!\in\! [t^{\rm{f}}_{p\!+\!1},\! t^{\rm{f}}_{p\!+\!2})$.
	According to \cref{assumption_u_performance}, the upper bound of the difference between $\bm{u}(t)$ and $\bm{\pi}(t, \bm{x}(t), \bm{f}(\bm{x}(t)))$, i.e., $\psi(t)$, is firstly investigated as follows.
	
	\begin{lemma} \label{lemma_delta_u}
		Let \cref{assumption_dataset,assumption_GP,assumption_u,assumption_computational_time} hold and choose $\delta \in (0,1)$.
		Then, the difference $\psi(t)$ in \cref{assumption_u_performance} between $\bm{u}(t)$ in \eqref{eqn_u} and $\bm{\pi}(t, \bm{x}(t), \bm{f}(\bm{x}(t)))$ for all $t \in [t^{\rm{u}}_k, t^{\rm{u}}_{k+1})$ and $t^{\rm{u}}_k \in [t^{\rm{f}}_{p+1}, t^{\rm{f}}_{p+2})$ for given $k, p \in \mathbb{N}$ is bounded as
		\begin{align} \label{eqn_delta_u_bound}
			\psi(t) \le \phi(\eta(\bm{x}(t^{\rm{f}}_p)), t^{\rm{u}}_k - t^{\rm{f}}_p, t - t^{\rm{u}}_k)
		\end{align}
		with a probability of at least $1 - \delta$, where $\phi(a,b,c) \!=\! L_{\pi,f} a \!+\! L_{\pi,f} L_f \sqrt{F (b \!+\! c)} \!+\! (L_{\pi,t} \!+\! L_{\pi,x} F) c$ and
		\begin{align} \label{eqn_F}
			F = \bar{h} + L_h \bar{\pi} + L_h L_{\pi,f} \underline{\eta} + (L_h L_{\pi,f} + 1) \bar{\eta}
		\end{align}
		with $\bar{\eta} = (\mathrm{tr} (\bm{\mathcal{B}} \bm{\Sigma}_f) )^{1/2}$, $\bar{h} = \max_{\bm{x} \in \mathbb{X}} \| \bm{h}(\bm{x}, \bm{0}) \|$, $\bm{\Sigma}_f \!= \mathrm{diag}(\sigma_{f,1}^2, \cdots, \sigma_{f,n}^2)$ and $L_f = \sqrt{2 \sum_{i=1}^n \Gamma_i^2 L_{\kappa_i}}$.
	\end{lemma}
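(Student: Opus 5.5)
We split $\psi(t)$ by the triangle inequality into a learning-error term, a state-staleness term, and a time-staleness term. Since $\bm{u}(t) = \bm{\pi}(t^{\rm{u}}_k, \bm{x}(t^{\rm{u}}_k), \bm{\mu}(\bm{x}(t^{\rm{f}}_p)))$, we write
\begin{align*}
\psi(t) \le{}& \| \bm{\pi}(t^{\rm{u}}_k, \bm{x}(t^{\rm{u}}_k), \bm{\mu}(\bm{x}(t^{\rm{f}}_p))) - \bm{\pi}(t^{\rm{u}}_k, \bm{x}(t^{\rm{u}}_k), \bm{f}(\bm{x}(t))) \| \\
&+ \| \bm{\pi}(t^{\rm{u}}_k, \bm{x}(t^{\rm{u}}_k), \bm{f}(\bm{x}(t))) - \bm{\pi}(t, \bm{x}(t), \bm{f}(\bm{x}(t))) \|.
\end{align*}
By \cref{assumption_u}, the second term is at most $L_{\pi,t}(t - t^{\rm{u}}_k) + L_{\pi,x}\|\bm{x}(t) - \bm{x}(t^{\rm{u}}_k)\|$. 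The first term is at most $L_{\pi,f}\|\bm{\mu}(\bm{x}(t^{\rm{f}}_p)) - \bm{f}(\bm{x}(t^{\rm{f}}_p))\| + L_{\pi,f}\|\bm{f}(\bm{x}(t^{\rm{f}}_p)) - \bm{f}(\bm{x}(t))\|$. \cref{lemma_GP_error_bound} bounds the learning error by $\eta(\bm{x}(t^{\rm{f}}_p))$ with probability $1-\delta/2$, which produces the $L_{\pi,f} a$ term. It remains to bound the state increments and the variation of $\bm{f}$.

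The first step is a uniform speed bound $\|\dot{\bm{x}}\| \le F$. From \eqref{eqn_closed_loop_system}, \cref{assumption_Lh} gives
\[
\|\dot{\bm{x}}\| \le \bar{h} + L_h \|\bm{u}\| + \|\bm{f}(\bm{x})\|, \qquad \|\bm{u}\| \le \bar{\pi} + L_{\pi,f}\|\bm{\mu}\|.
\]
To bound $\|\bm{\mu}\|$, we use $\|\bm{\mu}\| \le \|\bm{f}\| + \eta$ together with $\eta \le \bar{\eta}$, which holds because $\sigma_i^2 \le \sigma_{f,i}^2$. We bound $\|\bm{f}(\bm{x})\|$ by $\bar{\eta}$ via the RKHS reproducing property, $|f_i(\bm{x})| \le \Gamma_i \sigma_{f,i} \le \bar{\beta}_i \sigma_{f,i}$. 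This gives $\|\bm{\mu}\| \le 2\bar{\eta}$, or $\|\bm{\mu}\| \le \underline{\eta} + \bar{\eta}$ if we instead use the posterior-after-update bound of \cref{corollary_online_GP_error_bound}. To match the stated $F$ exactly, we bound $\|\bm{\mu}\|$ by $\underline{\eta} + \bar{\eta}$, arguing through the corollary at the update instants. This bookkeeping holds on the same $1-\delta/2$ event, and a union bound over the two GP events yields the overall probability $1-\delta$. Consequently $\|\bm{x}(t) - \bm{x}(t^{\rm{u}}_k)\| \le F(t - t^{\rm{u}}_k)$, which gives the term $L_{\pi,x} F c$.

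The second step bounds the variation of $\bm{f}$, and this is where the square root arises. For $f_i \in \mathcal{H}_{\kappa_i}$, the reproducing property gives
\[
|f_i(\bm{x}) - f_i(\bm{x}')| \le \Gamma_i \|\kappa_i(\cdot, \bm{x}) - \kappa_i(\cdot, \bm{x}')\|_{\kappa_i} = \Gamma_i \sqrt{2\sigma_{f,i}^2 - 2\kappa_i(\bm{x}, \bm{x}')}.
\]
Because $\kappa_i$ is stationary, $\kappa_i(\bm{x}, \bm{x}) = \kappa_i(\bm{x}', \bm{x}')$, so Lipschitz continuity of the kernel gives $\sigma_{f,i}^2 - \kappa_i(\bm{x}, \bm{x}') \le L_{\kappa,i}\|\bm{x} - \bm{x}'\|$. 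Summing the squares over $i$ yields
\[
\|\bm{f}(\bm{x}) - \bm{f}(\bm{x}')\| \le L_f \sqrt{\|\bm{x} - \bm{x}'\|}.
\]
Combining this with the speed bound over the interval $[t^{\rm{f}}_p, t]$ of length $b + c$ gives $L_{\pi,f} L_f \sqrt{F(b + c)}$.

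The main obstacle is the derivation of $F$. We need a bound on $\|\bm{u}\|$ that holds uniformly over all intervals, including the initial one before any prediction is available, where $\hat{\bm{f}}$ is taken as $\bm{0}$. We also need to justify the precise combination $L_h L_{\pi,f}\underline{\eta} + (L_h L_{\pi,f} + 1)\bar{\eta}$. If the exact constant is hard to reproduce, we would show that a bound of this form dominates the speed, which suffices for the lemma.
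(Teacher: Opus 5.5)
Your decomposition matches how $\phi$ is built. \cref{lemma_GP_error_bound} gives the $L_{\pi,f}a$ term. The RKHS H\"older estimate $\|\bm f(\bm x)-\bm f(\bm x')\|\le L_f\|\bm x-\bm x'\|^{1/2}$, applied over $[t^{\rm{f}}_p,t]$ of length $b+c$, gives the square-root term; your derivation of $L_f$ from the reproducing property and stationarity is correct. A uniform speed bound gives $(L_{\pi,t}+L_{\pi,x}F)c$.

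The step that does not close is the value of $F$. For an arbitrary learning strategy, the only bound you have on a held compensation is $\|\bm\mu(\bm x(t^{\rm{f}}_q))\|\le\|\bm f(\bm x(t^{\rm{f}}_q))\|+\eta(\bm x(t^{\rm{f}}_q))\le 2\bar\eta$. This yields $F'=\bar h+L_h\bar\pi+(2L_hL_{\pi,f}+1)\bar\eta=F+L_hL_{\pi,f}(\bar\eta-\underline\eta)$, which is strictly larger than $F$ whenever $L_hL_{\pi,f}>0$ and $\underline\eta<\bar\eta$ (e.g.\ $\sigma_{w,i}<\sigma_{f,i}$ for all $i$).

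Your repair through \cref{corollary_online_GP_error_bound} bounds $\|\bm\mu(\bm x(t^{\rm{f}}_q))-\bm f(\bm x(t^{\rm{f}}_q))\|$ by $\underline\eta$ only if $\{\bm x(t^{\rm{f}}_q),\bm y(t^{\rm{f}}_q)\}\in\mathbb D(t^{\rm{f}}_q)$. The speed bound on $[t^{\rm{f}}_p,t]$ needs this for every prediction whose output is held there by the zero-order hold, which may involve several $q\le p$. On the initial segment your choice $\hat{\bm f}=\bm 0$ is harmless, but the guess $\hat{\bm f}_{-1}$ used later in the paper only has error $\eta_{-1}>\underline\eta$. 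At prediction instants without a model update, nothing gives you $\underline\eta$. The lemma's hypotheses do not exclude such instants, and the event-triggered scheme of \cref{section_double_trigger} is built on them.

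Your fallback, that ``a bound of this form dominates the speed, which suffices,'' does not rescue the statement. Since $\phi$ is increasing in $F$, the inequality with a larger constant is strictly weaker than the one claimed. This particular $F$ also propagates into $\underline\phi$, $\eta_{-1}$ and both trigger functions.

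You have to commit explicitly to one of two versions:
\begin{enumerate}
\item Prove the lemma with $F'$. This holds for any strategy, on the event of \cref{lemma_GP_error_bound} alone.
\item Add the hypothesis that every GP output in use was computed from a data set containing its own query pair, as under the time-triggered learning of \cref{subsection_time_trigger}. Then $\|\hat{\bm f}\|\le\bar\eta+\underline\eta$, the stated $F$ follows, and the union bound with the event of \cref{corollary_online_GP_error_bound} gives confidence $1-\delta$.
\end{enumerate}
The $\underline\eta$ in $F$, and the confidence $1-\delta$ rather than $1-\delta/2$, suggest that the stated constant rests on the second version. The missing ingredient in your sketch is precisely this hypothesis, which you currently leave as an unresolved ``if.''
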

	\begin{IEEEproof}
		See appendix.
	\end{IEEEproof}
	
	\cref{lemma_delta_u} shows the upper bound difference between ideal input $\bm{\pi}(t, \bm{x}(t), \bm{f}(\bm{x}(t)))$ and actual input $\bm{u}(t)$ denoted as $\phi(\eta(\bm{x}(t^{\rm{f}}_p)), t^{\rm{u}}_k - t^{\rm{f}}_p, t - t^{\rm{u}}_k)$, whose value depends on prediction accuracy $\eta(\bm{x}(t^{\rm{f}}_p))$, computation time $t^{\rm{u}}_k - t^{\rm{f}}_p$ and control interval $t - t^{\rm{u}}_k)$.
	Note that the bound $\phi(\cdot, \cdot, \cdot)$ in \cref{lemma_delta_u} is suitable for any control update and online learning strategy.
	Following \cref{corollary_online_GP_error_bound} and \cref{assumption_computational_time}, the GP model update conducted at $t^{\rm{f}}_p$ with $p \in \mathbb{N}$ results in $\eta(\bm{x}(t^{\rm{f}}_p)) \le \underline{\eta}$ and $t^{\rm{f}}_{p+1} - t^{\rm{f}}_p \le \bar{\Delta}^+$ respectively, such that for any $k \in \mathbb{N}$ it has
	\begin{align} \label{eqn_tuk_tfp}
		t^{\rm{u}}_k - t^{\rm{f}}_p < t^{\rm{f}}_{p+2} - t^{\rm{f}}_p = \Delta(t^{\rm{f}}_{p+1}) + \Delta(t^{\rm{f}}_p) \le 2 \bar{\Delta}^+
	\end{align}
	with $t^{\rm{u}}_k \in [t^{\rm{f}}_{p+1}, t^{\rm{f}}_{p+2})$.
	Moreover, considering the minimal trigger interval $\varepsilon$ for control update shown in \cref{subsection_introduction_event_triggered_onlineLearning}, the guaranteed upper bound of input difference $\psi(\cdot)$ in \cref{lemma_delta_u} is considered as $\underline{\phi} = \phi(\underline{\eta}, 2 \bar{\Delta}^+, \varepsilon)$.
	Then, the performance for in-network online learning-based control is shown as follows.
	
	\begin{theorem} \label{theorem_control_performance}
		Consider a closed-loop system \eqref{eqn_closed_loop_system} satisfying \cref{assumption_Lh,assumption_u,assumption_u_performance,assumption_dataset,assumption_computational_time,assumption_GP}.
		Choose $\delta \in (0,1)$ and adopt any control and online learning strategy, such that $\psi(t) \le \underline{\phi}$ holds for any $t \in \mathbb{R}_{0,+}$.
		Let $\bar{V}(t)$ be the solution of
		\begin{align} \label{eqn_e_bar_dynamics}
			\dot{\bar{V}}(t) = - \alpha( \bar{V}(t) ) + \gamma(\underline{\phi})
		\end{align}
		with $\bar{V}(0) = V(0)$, then the tracking error $e(t)$ is upper bounded by $\bar{e}(t) = \alpha_1^{-1}(\bar{V}(t))$ with a probability of at least $1 - \delta$, i.e., $e(t) \le \bar{e}(t)$ for all $t \in \mathbb{R}_{0,+}$.
	\end{theorem}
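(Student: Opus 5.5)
The proof is a comparison argument for the scalar Lyapunov inequality in \cref{assumption_u_performance}. The probabilistic part is only needed to make the hypothesis $\psi(t)\le\underline{\phi}$ meaningful; the dynamical part is deterministic.

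\textbf{Step 1: fix the probabilistic event.} Let $E$ be the event on which the GP error bounds of \cref{lemma_GP_error_bound} and \cref{corollary_online_GP_error_bound} hold simultaneously for all $p\in\mathbb{N}$. By a union bound over the two statements, each failing with probability at most $\delta/2$, $E$ has probability at least $1-\delta$. This is exactly the event under which \cref{lemma_delta_u} gives $\psi(t)\le\phi(\cdot)$, and under which the chosen strategy ensures $\psi(t)\le\underline{\phi}$ for all $t$. I will work on $E$ from now on.

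\textbf{Step 2: reduce to a scalar differential inequality.} On $E$, \cref{assumption_u_performance} holds along the closed-loop trajectory of \eqref{eqn_closed_loop_system}, because that assumption is stated for arbitrary inputs $\bm{u}(\cdot)$. Since $\gamma\in\mathcal{K}$ is nondecreasing and $\psi(t)\le\underline{\phi}$,
\begin{align*}
\dot V(t)\le-\alpha(V(t))+\gamma(\psi(t))\le-\alpha(V(t))+\gamma(\underline{\phi}).
\end{align*}
The input $\bm{u}$ is piecewise constant under the zero-order hold, so $V(t)$ is absolutely continuous and this inequality holds for almost every $t$. That is enough for comparison.

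\textbf{Step 3: apply the comparison lemma.} The right-hand side $g(v)=-\alpha(v)+\gamma(\underline{\phi})$ depends only on $v$. The comparison lemma (e.g.\ Khalil, Lemma 3.4) then gives $V(t)\le\bar V(t)$ for all $t\ge0$, since $\bar V(0)=V(0)$.

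I expect this step to be the main obstacle, because $\alpha\in\mathcal{K}_\infty$ is only continuous, not Lipschitz. Solutions of \eqref{eqn_e_bar_dynamics} might then fail to be unique, and the standard lemma assumes local Lipschitzness. I would handle it in one of two ways.
\begin{itemize}
\item Argue by contradiction directly. Suppose $t^*=\inf\{t: V(t)>\bar V(t)\}$ is finite, so $V(t^*)=\bar V(t^*)$. Compare with the solution of the perturbed equation $\dot{\bar V}_\epsilon=-\alpha(\bar V_\epsilon)+\gamma(\underline{\phi})+\epsilon$, for which strict inequality yields a contradiction. Then let $\epsilon\to0$, which gives the bound relative to the maximal solution.
\item Alternatively, rely on the one-dimensional structure. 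Because $\alpha$ is nondecreasing, $g$ is nonincreasing in $v$. With a nonincreasing right-hand side, the forward solution from a given initial value is unique, since two solutions cannot separate forward in time.
\end{itemize}

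\textbf{Step 4: translate back to the tracking error.} From \cref{assumption_u_performance}, $\alpha_1(e(t))\le V(t)\le\bar V(t)$. Since $\alpha_1\in\mathcal{K}$ is strictly increasing, this gives $e(t)\le\alpha_1^{-1}(\bar V(t))=\bar e(t)$.

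There is a minor caveat about the range of $\alpha_1$. If $\alpha_1$ is not of class $\mathcal{K}_\infty$, $\alpha_1^{-1}$ must be defined on the relevant range. I would note that $\bar V(t)\le\max\{V(0),\alpha^{-1}(\gamma(\underline{\phi}))\}$, so $\bar V$ stays in a bounded set, and assume implicitly, as the statement does, that this set lies in the range of $\alpha_1$.

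All of the above holds on $E$, so $e(t)\le\bar e(t)$ for all $t\in\mathbb{R}_{0,+}$ with probability at least $1-\delta$.
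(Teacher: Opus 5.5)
Your proposal is correct and follows essentially the same route as the paper: on the probability-$(1-\delta)$ event behind \cref{lemma_delta_u}, bound $\gamma(\psi(t))\le\gamma(\underline{\phi})$ in \cref{assumption_u_performance}, apply the comparison lemma to obtain $V(t)\le\bar{V}(t)$ from $\bar{V}(0)=V(0)$, and invert $\alpha_1$ to get $e(t)\le\bar{e}(t)$. Your observation that the nonincreasing right-hand side $-\alpha(v)+\gamma(\underline{\phi})$ yields forward uniqueness and the comparison without any Lipschitz assumption on $\alpha$ is a valid refinement of a point the paper simply takes for granted when it speaks of ``the solution'' $\bar{V}(\cdot)$.
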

	\begin{IEEEproof}
		See appendix.
	\end{IEEEproof}
	
	\cref{theorem_control_performance} shows the boundness of the tracking error for in-network online learning-based control system under a wide range of control and online learning strategies satisfying $\psi(t) \le \underline{\phi}$, $\forall t \in \mathbb{R}_{0,+}$.
	Note that the solution $\bar{V}(\cdot)$ in \eqref{eqn_e_bar_dynamics} is a function of $\underline{\phi}$ and $V(0)$.
	Let $S(t | s_0, \phi )$ be the solution at $t \in \mathbb{R}_{0,+}$ of the differential equation
	\begin{align} \label{eqn_dynamics_s}
		\dot{s}(t) = - \alpha( s(t) ) + \gamma(\phi)
	\end{align}
	with $s(0) = s_0$, such that $\bar{V}(\cdot)$ in \cref{theorem_control_performance} is written as
	\begin{align}
		\bar{V}(t) = S(t | V(0, e(0)), \underline{\phi} ),
	\end{align}
	resulting in $\bar{e}(t) \le \alpha_1^{-1} (S(t | \alpha_2(e(0)), \underline{\phi} ))$.
	
	Moreover, \cref{theorem_control_performance} provides information about the guaranteed transient behavior of the tracking error, and indicates the ultimate tracking error bound denoted as $e(\infty) \le \alpha_1^{-1}(\alpha^{-1}(\gamma(\underline{\phi})))$.
	Note that the ultimate tracking error bounds derived in \cite{umlauft2019feedback,jiao2022backstepping,dai2023can} are an analytical expression of $\alpha_1^{-1}(\alpha^{-1}(\gamma(\underline{\phi})))$ given the specific system structure and control laws considered in these works.
	
	Note that the boundness of the tracking error $e(\cdot)$ by $\bar{e}(\cdot)$ requires the satisfaction of $\psi(t) \le \underline{\phi}$ for all $t \in \mathbb{R}_{0,+}$.
	In the following subsection, an exemplar control and online learning strategy is given to fulfill the premises in \cref{theorem_control_performance}.

	\subsection{Exemplar Control and Online Learning via Time-trigger}
	\label{subsection_time_trigger}
	
	Due to the expression of $\phi(\cdot,\cdot,\cdot)$ after \eqref{eqn_delta_u_bound}, the boundness $\psi(\cdot) \le \underline{\phi}$ is ensured if $\phi(\eta(\bm{x}(t^{\rm{f}}_p)), t^{\rm{u}}_k - t^{\rm{f}}_p, t - t^{\rm{u}}_k) \le \underline{\phi}$ for any $t \in [t^{\rm{u}}_k, t^{\rm{u}}_{k+1})$ and $t^{\rm{u}}_k \in [t^{\rm{f}}_{p+1}, t^{\rm{f}}_{p+2})$ with $p, k \in \mathbb{N}$.
	Considering the definition of $\underline{\phi}$ before \cref{theorem_control_performance}, a naive strategy is to design control and online learning separately via time-triggered mechanism.
	Specifically, design
	\begin{itemize}
		\item the time-triggered control with $t^{\rm{u}}_{k+1} - t^{\rm{u}}_k = \varepsilon$, $\forall k \in \mathbb{N}$, such that $t - t^{\rm{u}}_k \le t^{\rm{u}}_{k+1} - t^{\rm{u}}_k = \varepsilon$, $\forall t \in [t^{\rm{u}}_k,t^{\rm{u}}_{k+1})$;
		\item the time-triggered online learning, such that the data pair $\{ \bm{x}(t^{\rm{f}}_p), \bm{y}(t^{\rm{f}}_p)\}$ is added to $\mathbb{D}$ at $t^{\rm{f}}_p$ for each $p \in \mathbb{N}$.
	\end{itemize}
	Consider the online learning performance shown in \cref{corollary_online_GP_error_bound} and the bounded computation time in \cref{assumption_computational_time}, it is obvious to see that the time-triggered online learning ensures $\eta(\bm{x}(t^{\rm{f}}_p)) \le \underline{\eta}$ and $t^{\rm{u}}_k - t^{\rm{f}}_p \le 2 \bar{\Delta}^+$ as in \eqref{eqn_tuk_tfp} for any $p, k \in \mathbb{N}$ and $t^{\rm{u}}_k \in [t^{\rm{f}}_{p+1}, t^{\rm{f}}_{p+2})$.
	Therefore, the upper boundness of $\phi(\eta(\bm{x}(t^{\rm{f}}_p)), t^{\rm{u}}_k - t^{\rm{f}}_p, t - t^{\rm{u}}_k) \le \underline{\phi}$ by $\underline{\phi}$ is guaranteed, inducing $\psi(t) \le \underline{\phi}$ for any $t \in [0, t^{\rm{f}}_1)$.
	
	Additionally, noting that the computation of $\bm{\mu}(\bm{x}(t^{\rm{f}}_0))$ is unfinished in $t \in [0, t^{\rm{f}}_1)$, the compensation of $\bm{f}(\bm{x}(t))$ relies on the initial guess denoted as $\hat{\bm{f}}_{-1} \in \mathbb{R}^n$.
	To ensure the boundness of $\psi(t)$ with $t \in [0, t^{\rm{f}}_1)$ by $\underline{\phi}$, some specific requirement on $\hat{\bm{f}}_{-1}$ is required shown as follows.
	
	\begin{corollary} \label{corollary_time_trigger_performance}
		Let all assumptions in \cref{theorem_control_performance} hold and apply the time-triggered control and online learning strategy.
		Choose $\delta \in (0,1)$ and apply $\bm{u}(t) = \bm{\pi}(t, \bm{x}(t), \hat{\bm{f}}_{-1})$ for all $t \in [t^{\rm{f}}_0, t^{\rm{f}}_1)$ with the initial guess $\hat{\bm{f}}_{-1}$ satisfying
		\begin{align} \label{eqn_initial_prediction_error_bound}
			\Pr \{ \| \bm{f}(\bm{x}(t^{\rm{f}}_0)) - \hat{\bm{f}}_{-1} \| \le \eta_{-1} \} \ge 1 - \delta,
		\end{align}
		where $\eta_{-1} = \underline{\eta} + L_f F^{1/2} (\sqrt{2 \bar{\Delta}^+ + \varepsilon} - \sqrt{\bar{\Delta}^+ + \varepsilon})$ with $\bar{\Delta}^+$ in \eqref{eqn_Delta_bar}.
		Then, the boundness $e(t) \le \bar{e}(t)$ holds for any $t \in \mathbb{R}_{0,+}$ with a probability of at least $1 - \delta$.
	\end{corollary}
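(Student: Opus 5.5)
The plan is to reduce the claim to \cref{theorem_control_performance}. That theorem already gives $e(t)\le\bar e(t)$ for all $t$, provided $\psi(t)\le\underline{\phi}$ for all $t\in\mathbb{R}_{0,+}$. The discussion preceding the corollary establishes $\psi(t)\le\underline{\phi}$ for $t\ge t^{\rm f}_1$ under time-triggered control and online learning: \cref{corollary_online_GP_error_bound} gives $\eta(\bm{x}(t^{\rm f}_p))\le\underline{\eta}$, \eqref{eqn_tuk_tfp} gives $t^{\rm u}_k-t^{\rm f}_p\le 2\bar\Delta^+$, and $t-t^{\rm u}_k\le\varepsilon$. Plugging these into \cref{lemma_delta_u} and using that $\phi$ is monotonically nondecreasing in each argument yields $\psi(t)\le\phi(\underline{\eta},2\bar\Delta^+,\varepsilon)=\underline{\phi}$. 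So the only remaining work is the initial interval $[t^{\rm f}_0,t^{\rm f}_1)$, on which the guess $\hat{\bm f}_{-1}$ is used.

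On $[0,t^{\rm f}_1)$, I would redo the argument of \cref{lemma_delta_u} with $\hat{\bm f}_{-1}$ in place of $\bm{\mu}(\bm{x}(t^{\rm f}_p))$ and with reference time $t^{\rm f}_0=0$. Since $\bm{u}(t)=\bm{\pi}(t,\bm{x}(t),\hat{\bm f}_{-1})$, \cref{assumption_u} gives
\[
\psi(t)\le L_{\pi,f}\|\hat{\bm f}_{-1}-\bm f(\bm{x}(t))\|\le L_{\pi,f}\big(\|\hat{\bm f}_{-1}-\bm f(\bm{x}(0))\|+\|\bm f(\bm{x}(0))-\bm f(\bm{x}(t))\|\big).
\]
(If the zero-order hold of \eqref{eqn_u} is kept instead, an extra $(L_{\pi,t}+L_{\pi,x}F)(t-t^{\rm u}_k)\le(L_{\pi,t}+L_{\pi,x}F)\varepsilon$ term appears, which also sits inside $\underline{\phi}$.) For the second term I reuse the estimate from the proof of \cref{lemma_delta_u}: RKHS continuity gives $\|\bm f(\bm{x})-\bm f(\bm{x}')\|\le L_f\|\bm{x}-\bm{x}'\|^{1/2}$, and the boundedness $\|\dot{\bm x}\|\le F$ gives $\|\bm{x}(t)-\bm{x}(0)\|\le F t$. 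Together with $t<t^{\rm f}_1\le\bar\Delta^+$, this yields $\|\bm f(\bm{x}(0))-\bm f(\bm{x}(t))\|\le L_fF^{1/2}\sqrt{\bar\Delta^+}$.

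Using \eqref{eqn_initial_prediction_error_bound}, on an event of probability at least $1-\delta$ I then get
\[
\psi(t)\le L_{\pi,f}\eta_{-1}+L_{\pi,f}L_fF^{1/2}\sqrt{\bar\Delta^+}.
\]
It remains to show this is at most $\underline{\phi}=L_{\pi,f}\underline{\eta}+L_{\pi,f}L_fF^{1/2}\sqrt{2\bar\Delta^++\varepsilon}+(L_{\pi,t}+L_{\pi,x}F)\varepsilon$. Substituting $\eta_{-1}=\underline{\eta}+L_fF^{1/2}(\sqrt{2\bar\Delta^++\varepsilon}-\sqrt{\bar\Delta^++\varepsilon})$, the inequality reduces to $\sqrt{\bar\Delta^+}\le\sqrt{\bar\Delta^++\varepsilon}$, with the nonnegative term $(L_{\pi,t}+L_{\pi,x}F)\varepsilon$ to spare. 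The $\varepsilon$ inside the square roots in $\eta_{-1}$ presumably accounts for $\|\bm{x}(t)-\bm{x}(0)\|$ with $t$ up to $\bar\Delta^++\varepsilon$ when the hold is accounted for. I would carry the hold-interval version to be safe: bound the elapsed time by $\bar\Delta^++\varepsilon$, so the displacement term becomes $L_fF^{1/2}\sqrt{\bar\Delta^++\varepsilon}$, and $\eta_{-1}$ is then exactly what makes the sum equal $\underline{\phi}$ minus the $\varepsilon$-term.

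The main obstacle is the probability bookkeeping. \cref{theorem_control_performance} is stated at confidence $1-\delta$, and the initial-guess event also has probability $1-\delta$. I would argue that \eqref{eqn_initial_prediction_error_bound} plays the role of the GP error event for index $p=-1$, so that it is covered by the same union-bound budget as in \cref{lemma_GP_error_bound} (each $\delta/2$ component). Alternatively, one can note that the GP bound is uniform over the RKHS event, so the intersection with the initial-guess event still has probability at least $1-\delta$. If that does not go through cleanly, I would state the result on the intersection event. Once $\psi(t)\le\underline{\phi}$ holds on all of $\mathbb{R}_{0,+}$, \cref{theorem_control_performance} directly yields $e(t)\le\bar e(t)$.
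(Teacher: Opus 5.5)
Your reduction is the paper's. For control updates after $t^{\rm{f}}_1$, the time-triggered discussion gives $\psi\le\underline{\phi}$. For inputs computed on $[0,t^{\rm{f}}_1)$ with $\hat{\bm f}_{-1}$, you redo \cref{lemma_delta_u} with $t^{\rm{u}}_k-t^{\rm{f}}_0\le\bar\Delta^+$ and $t-t^{\rm{u}}_k\le\varepsilon$, which gives $\psi(t)\le\phi(\eta_{-1},\bar\Delta^+,\varepsilon)$. The constant $\eta_{-1}$ is calibrated exactly so that this equals $\underline{\phi}=\phi(\underline{\eta},2\bar\Delta^+,\varepsilon)$, and \cref{theorem_control_performance} then finishes. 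Keep the hold-interval version. With a zero-order hold, an input computed just before $t^{\rm{f}}_1$ is still applied after $t^{\rm{f}}_1$, so the case split must be by the update time $t^{\rm{u}}_k$, not by $t$. Your first paragraph's claim for all $t\ge t^{\rm{f}}_1$ is only covered once that version is used.

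The probability step does not hold as you wrote it. The ``alternatively'' argument is wrong. Uniformity of the GP bound over states and data sets is a property of the GP concentration event; it says nothing about an externally supplied $\hat{\bm f}_{-1}$. Intersecting that event with the one in \eqref{eqn_initial_prediction_error_bound} gives, by a union bound, only a confidence strictly below $1-\delta$ (e.g.\ $1-2\delta$).

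Your first argument is valid only for the construction mentioned right after the corollary: $\hat{\bm f}_{-1}=\bm{\mu}(\bm{x}(0),\mathbb{D}(0))$ with $\{\bm{x}(0),\bm{y}(0)\}\in\mathbb{D}(0)$. In that case \cref{corollary_online_GP_error_bound} gives $\|\bm f(\bm x(0))-\hat{\bm f}_{-1}\|\le\underline{\eta}\le\eta_{-1}$ on the same event used for $p\ge 0$, so no extra confidence is spent.

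The same construction also repairs a step you left implicit. Judging from \eqref{eqn_F}, the term $L_hL_{\pi,f}\underline{\eta}$ means $F$ bounds $\|\dot{\bm x}\|$ only when the applied compensation satisfies $\|\hat{\bm f}\|\le\bar\eta+\underline{\eta}$. A generic guess that meets only \eqref{eqn_initial_prediction_error_bound} gives $\|\hat{\bm f}_{-1}\|\le\bar\eta+\eta_{-1}$ with $\eta_{-1}>\underline{\eta}$. So $\|\bm x(t)-\bm x(0)\|\le Ft$ on the first interval, and the $p=0$ displacement bound after it, are not automatic.

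For a truly arbitrary $\hat{\bm f}_{-1}$, both points would need separate handling. You would get a weaker confidence from the union bound, and a larger velocity bound on $[0,t^{\rm{f}}_1)$ that no longer matches the stated formula for $\eta_{-1}$.
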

	\begin{IEEEproof}
		See appendix.
	\end{IEEEproof}
	
	\cref{corollary_time_trigger_performance} shows the same performance as in \cref{theorem_control_performance} using the time-triggered control and online learning strategy and initial condition \eqref{eqn_initial_prediction_error_bound}.
	Note that the condition \eqref{eqn_initial_prediction_error_bound} can be realized by adding the data pair $\{ \bm{x}(0), \bm{y}(0) \}$ into the data set $\mathbb{D}(0)$ at $t = 0$ and generating $\bm{f}_{-1}$ as $\bm{f}_{-1} = \bm{\mu}(\bm{x}(0), \mathbb{D}(0))$.
	Therefore, the time-triggered control and online learning strategy with initial condition is practically implementable.
	
	While time-triggered control and online learning mechanism is proven to achieve the desired tracking performance with $\bar{e}(\cdot)$ in \cref{corollary_time_trigger_performance} by satisfying the condition $\psi(\cdot) \le \underline{\phi}$ in \cref{theorem_control_performance}, it ignores learning and control efficiency.
	In the next section, a novel design of control and online learning strategy for both high communication and computation efficiencies are shown under the computational delay.

	\section{Asynchronous Event-triggered Control and \\ Online Learning under Computational Delay}
	\label{section_double_trigger}
	
	In this section, a more efficient online learning and control strategy with event-trigger mechanism is considered\footnote{The notations used in this section is summarized in \cref{table_notation}.}.
	First, the effect of computational delay on the event-trigger design is discussed in \cref{subsection_effect_delay_on_trigger}, which serves as a guidance for asynchronous event-triggered control and online learning design in \cref{subsection_double_trigger_design} with performance guarantee.
	Then, \cref{subsection_double_trigger_exp_stable} shows an example with exponential stability, such that the proposed event-trigger exhibits a closed form.

	\begin{table}[t] 
		\centering
		\caption{Summary of Main Notations in \cref{section_double_trigger}}
		\label{table_notation}
		\begin{tabular}{ll}
			\hline
			Symbol & Description \\ 
			\hline
			$\Delta^{\rm{p}}(\cdot), \Delta^{\rm{u}}(\cdot)$ & Computation time for prediction and update \\
			$\bar{\Delta}^{\rm{p}}, \bar{\Delta}^{\rm{u}}$ & Upper bounds of $\Delta^{\rm{p}}(\cdot), \Delta^{\rm{u}}(\cdot)$ \\
			$\Delta(\cdot), \bar{\Delta}$ & Total computation time and its upper bound \\
			$\{ t^{\rm{f}}_p \}_{p \in \mathbb{N}}, \{ t^{\rm{u}}_k \}_{k \in \mathbb{N}}$ & Time sequence for online learning and control \\
			$V(\cdot), \bar{V}(\cdot)$ & Lyapunov function and its upper bound \\
			$\phi_{p,1}\!(\cdot), \phi_{p,2}\!(\cdot, \cdot)$ & Value of $\phi(\cdot, \cdot, \cdot)$ in $[t^{\rm{f}}_p,\! t^{\rm{f}}_{p+1})$ and $[t^{\rm{f}}_{p+1},\! t^{\rm{f}}_{p+2})$ \\
			$\hat{V}_{p,1}\!(\cdot, \cdot), \hat{V}_{p,2}\!(\cdot, \cdot, \cdot)$ & Bound of $V(\cdot)$ in $[t^{\rm{f}}_p,\! t^{\rm{f}}_{p+1})$ and $[t^{\rm{f}}_{p+1},\! t^{\rm{f}}_{p+2})$ \\
			\hline
		\end{tabular}
	\end{table}

	\subsection{Analysis of Computational Delay on the Event-trigger}
	\label{subsection_effect_delay_on_trigger}
	
	In this subsection, we aim to answer the question ``whether to update the controller or machine learning model when performance is not sufficiently good'' in \cref{section_introduction} by analyzing the effects of online learning and control updates on the instant evolution trend of the value of Lyapunov function.
	Specifically, combining the result in \cref{lemma_delta_u} and \cref{theorem_control_performance}, the time derivative of the Lyapunov function $V(\cdot, \cdot)$ is bounded by
	\begin{align}
		\dot{V}(t) \le - \alpha( V(t) ) + \gamma\big(\phi\big(\eta(\bm{x}(t^{\rm{f}}_p)), t^{\rm{f}}_{p+2} - t^{\rm{f}}_p, t - t^{\rm{u}}_k\big)\big) 
	\end{align}
	with $k, p \in \mathbb{N}$ satisfying $t \in [t^{\rm{u}}_k, t^{\rm{u}}_{k+1})$ and $t^{\rm{u}}_k \in [t^{\rm{f}}_{p+1}, t^{\rm{f}}_{p+2})$.
	Intuitively, a good control input $\bm{u}(t)$ indicates small $\phi(\eta(\bm{x}(t^{\rm{f}}_p))$, $t^{\rm{f}}_{p+2} - t^{\rm{f}}_p$ and $t - t^{\rm{u}}_k$ to maintain the boundness of tracking error by $\bar{V}(t)$ in \cref{theorem_control_performance}.
	Note that it does not force $\phi(\eta(\bm{x}(t^{\rm{f}}_p)), t^{\rm{f}}_{p+2} - t^{\rm{f}}_p, t - t^{\rm{u}}_k) \le \underline{\phi}$ for any $t \in \mathbb{R}_{0,+}$ as in the discussion of \cref{lemma_delta_u}.
	Instead, it only requires $\phi(\eta(\bm{x}(t^{\rm{f}}_p)), t^{\rm{f}}_{p+2} - t^{\rm{f}}_p, t - t^{\rm{u}}_k) \le \underline{\phi}$ if $V(t) = \bar{V}(t)$.
	Considering $t^{\rm{f}}_{p+2} - t^{\rm{f}}_p = \Delta(t^{\rm{f}}_p) + \Delta(t^{\rm{f}}_{p+1})$, the value of $\phi(\eta(\bm{x}(t^{\rm{f}}_p)), t^{\rm{f}}_{p+2} - t^{\rm{f}}_p, t - t^{\rm{u}}_k)$ is composed of $3$ parts:
	\begin{itemize}
		\item $\eta(\bm{x}(t^{\rm{f}}_p))$ and $\Delta(t^{\rm{f}}_p)$, which depend on the online learning decision at previous time $t^{\rm{f}}_p$ and are unchangeable at $t$;
		\item $t - t^{\rm{u}}_k$, which is induced by intermittent control and becomes $0$ if $\bm{u}(t)$ is updated at $t$ inducing $t^{\rm{u}}_{k+1} = t$;
		\item $\Delta(t^{\rm{f}}_{p+1})$, which requires the information of future time instance $t^{\rm{f}}_{p+2}$ and thus is considered as unknown at $t$.
	\end{itemize}
	While the information of $t^{\rm{f}}_{p+2}$ is inaccessible at $t$, the conservative upper bound as $\Delta(t^{\rm{f}}_{p+1}) \le \bar{\Delta}^+$ is available according to \cref{assumption_computational_time}, such that $t^{\rm{f}}_{p+2} \le t^{\rm{f}}_{p+1} + \bar{\Delta}^+$ and $\phi(\eta(\bm{x}(t^{\rm{f}}_p)), t^{\rm{f}}_{p+2} - t^{\rm{f}}_p, t - t^{\rm{u}}_k) \le \phi(\eta(\bm{x}(t^{\rm{f}}_p)), \Delta(t^{\rm{f}}_p) + \bar{\Delta}^+, t - t^{\rm{u}}_k)$.
	As discussed before, the value of $\phi(\eta(\bm{x}(t^{\rm{f}}_p)), \Delta(t^{\rm{f}}_p) + \bar{\Delta}^+, t - t^{\rm{u}}_k)$ relies on the behavior of online learning and control at different time instances, i.e., $t^{\rm{f}}_p$ and $t$, such that the question ``whether to update the controller or machine learning model when performance is not sufficiently good'' in \cref{section_introduction} is answered from two perspectives shown as follows:
	\begin{itemize}
		\item when the control performance is below the expectation at $t$, i.e., $V(t) > \bar{V}(t)$, the control input $\bm{u}(t)$ is updated such that $t - t^{\rm{u}}_k = 0$;
		\item a GP model update at $t^{\rm{f}}_p$ ensures that the control update at $t$ achieves sufficiently good performance.
	\end{itemize}
	The requirement of ``sufficiently good performance'' for the GP prediction is explained in detail later.
	Next, the above answer is formulated mathematically.
	
	\subsubsection{Predictive Model Update}
	As shown in \cref{figure_computational_delay}, the online learning decision at $t^{\rm{f}}_p$ should guarantee ``sufficiently good performance'' in period $[t^{\rm{f}}_p, t^{\rm{f}}_{p+2})$, therefore the future evolution of the Lyapunov function $V(\cdot)$ is investigated.
	Considering the effects of GP online learning shown in \cref{figure_computational_delay}, we estimate the value of the Lyapunov function in $[t^{\rm{f}}_p, t^{\rm{f}}_{p+1})$ and $[t^{\rm{f}}_{p+1}, t^{\rm{f}}_{p+2})$ separately at $t^{\rm{f}}_p$ for all $p \in \mathbb{N}$.
	
	For the estimation in $[t^{\rm{f}}_p, t^{\rm{f}}_{p+1})$, the previous information $\eta(\bm{x}(t^{\rm{f}}_{p-1}))$ and $\Delta(t^{\rm{f}}_{p-1}) = t^{\rm{f}}_p - t^{\rm{f}}_{p-1}$ is available.
	Additionally, the estimation of $V(t)$ requires predefined control behavior.
	Taking the performance of the time-triggered control in \cref{subsection_time_trigger} as a baseline, the estimated value of the Lyapunov function $V(t)$ for $t \in [t^{\rm{f}}_p, t^{\rm{f}}_{p+1})$ is upper bounded by
	\begin{align} \label{eqn_hat_V_1_1}
		V(t) \le& S(t - t^{\rm{f}}_p | V(t^{\rm{f}}_p), \phi(\eta(\bm{x}(t^{\rm{f}}_{p-1})), \Delta(t^{\rm{f}}_{p-1}) + \Delta(t^{\rm{f}}_p), \varepsilon) ) \nonumber \\
		=& S(t - t^{\rm{f}}_p | V(t^{\rm{f}}_p), \phi_{p,1}(\Delta(t^{\rm{f}}_p)) ), 
	\end{align}
	where $\phi_{p,1}(\Delta(t^{\rm{f}}_p)) = \phi(\eta(\bm{x}(t^{\rm{f}}_{p-1})), \Delta(t^{\rm{f}}_{p-1}) + \Delta(t^{\rm{f}}_p), \varepsilon)$ is related to the computational time $\Delta(t^{\rm{f}}_p)$ starting at $t^{\rm{f}}_p$.
	Define $\hat{V}_{p,1}(t - t^{\rm{f}}_p, \Delta(t^{\rm{f}}_p) ) = S(t - t^{\rm{f}}_p | V(t^{\rm{f}}_p), \phi_{p,1}(\Delta(t^{\rm{f}}_p)) )$, the inequality in \eqref{eqn_hat_V_1_1} becomes
	\begin{align} \label{eqn_hat_V_1}
		V(t) \le \hat{V}_{p,1}(t - t^{\rm{f}}_p, \Delta(t^{\rm{f}}_p) )
	\end{align}
	for any $t \in [t^{\rm{f}}_p, t^{\rm{f}}_{p+1})$.
	Similarly, the Lyapunov function $V(t)$ with $t \!\in\! [t^{\rm{f}}_{p+1}, t^{\rm{f}}_{p+2})$ is written as
	\begin{align} \label{eqn_hat_V_2_1}
		V\!(t) \!\!\le& S\!(t \!\!-\!\! t^{\rm{f}}_{p\!+\!1} | V\!(t^{\rm{f}}_{p\!+\!1}),\! \phi(\eta(\bm{x}(t^{\rm{f}}_p)), \Delta(t^{\rm{f}}_p) \!\!+\!\! \Delta(t^{\rm{f}}_{p\!+\!1}),\! \varepsilon) ) \\
		\le& S\!(t \!\!-\!\! t^{\rm{f}}_p \!\!-\!\! \Delta(t^{\rm{f}}_p) \!|\! \hat{V}_{p,\! 1}\!(\!\Delta(t^{\rm{f}}_p),\! \Delta(t^{\rm{f}}_p)),\! \phi_{p,\! 2}\!(\eta(\bm{x}(t^{\rm{f}}_p))\!,\! \Delta\!(t^{\rm{f}}_p) \!) \!), \nonumber
	\end{align}
	where $\phi_{p,2}(\eta(\bm{x}(t^{\rm{f}}_p)), \Delta(t^{\rm{f}}_p)) = \phi(\eta(\bm{x}(t^{\rm{f}}_p)), \Delta(t^{\rm{f}}_p) + \bar{\Delta}^+, \varepsilon)$.
	For simplicity, let $\hat{V}_{p,2}(t - t^{\rm{f}}_p - \Delta(t^{\rm{f}}_p), \Delta(t^{\rm{f}}_p), \eta(\bm{x}(t^{\rm{f}}_p)) ) = S(t - t^{\rm{f}}_p - \Delta(t^{\rm{f}}_p) | \hat{V}_{p,1}(\Delta(t^{\rm{f}}_p), \Delta(t^{\rm{f}}_p)), \phi_{p,2}(\eta(\bm{x}(t^{\rm{f}}_p)), \Delta(t^{\rm{f}}_p)) )$, then the inequality in \eqref{eqn_hat_V_2_1} is further written as
	\begin{align} 
		V(t) \le \hat{V}_{p,2}(t - t^{\rm{f}}_p - \Delta(t^{\rm{f}}_p), \Delta(t^{\rm{f}}_p), \eta(\bm{x}(t^{\rm{f}}_p)) ).
	\end{align}
	Unlike the estimation of $V(t)$ for $t \in [t^{\rm{f}}_p, t^{\rm{f}}_{p+1})$ in \eqref{eqn_hat_V_1}, the upper bound $\hat{V}_{p,2}(t - t^{\rm{f}}_p - \Delta(t^{\rm{f}}_p), \Delta(t^{\rm{f}}_p), \eta(\bm{x}(t^{\rm{f}}_p)) )$ is not only related to the time $t - t^{\rm{f}}_p - \Delta(t^{\rm{f}}_p)$ and computational delay $\Delta(t^{\rm{f}}_p)$ but also includes the prediction error bound $\eta(\bm{x}(t^{\rm{f}}_p))$, which depends on the online learning decision at $t^{\rm{f}}_p$.
	
	For safety critical control with guaranteed tracking error bound $\bar{e}(\cdot)$ in \cref{theorem_control_performance}, the behavior of the GP model from the event-triggered mechanism should ensure upper bounds of the Lyapunov function obtained at $t^{\rm{f}}_p$, i.e., $\hat{V}_{p,1}(t - t^{\rm{f}}_p, \Delta(t^{\rm{f}}_p))$ and $\hat{V}_{p,2}(t - t^{\rm{f}}_p - \Delta(t^{\rm{f}}_p), \Delta(t^{\rm{f}}_p), \eta(\bm{x}(t^{\rm{f}}_p)) )$, are bounded by $\bar{V}(t)$.
	The above description for ``sufficiently good performance'' for GP model is formally formulated as follows.
	\begin{property} \label{property_ET_learning}
		Given the desired $\bar{V}(\cdot)$ in \cref{theorem_control_performance}, the event-triggered online learning mechanism ensures that
		\begin{align}
			&\hat{V}_{p,1}(t \!-\! t^{\rm{f}}_p, \Delta(t^{\rm{f}}_p) ) \!\le\! \bar{V}(t)
		\end{align}
		holds for all $t \in [t^{\rm{f}}_p, t^{\rm{f}}_p + \Delta(t^{\rm{f}}_p))$, $\forall p \in \mathbb{N}$ and
		\begin{align}
			&\hat{V}_{p,2}(t \!-\! t^{\rm{f}}_p \!-\! \Delta(t^{\rm{f}}_p), \Delta(t^{\rm{f}}_p), \eta(\bm{x}(t^{\rm{f}}_p)) ) \!\le\! \bar{V}(t) 
		\end{align}
		for all $t \in [t^{\rm{f}}_p + \Delta(t^{\rm{f}}_p), t^{\rm{f}}_p + \Delta(t^{\rm{f}}_p) + \Delta(t^{\rm{f}}_{p+1}) )$ with	any possible $\Delta(t^{\rm{f}}_p)$ and $\Delta(t^{\rm{f}}_{p+1})$ satisfying \cref{assumption_computational_time}.
	\end{property}

	Note that the estimation of the Lyapunov function value is based on the time-triggered mechanism of controller in \cref{subsection_time_trigger}.
	To alleviate the communication burden by transmitting less commands over network, the event-triggered control strategy is aimed for and discussed later.
	
	\subsubsection{Instant Control Update}
	
	As discussed previously, the control update at $t$ induces $t^{\rm{u}}_k = t$ immediately with $k \in \mathbb{N}$, saving the estimation phase for future $V(\cdot)$ as GP model update.
	To ensure the validity of the upper bound $\hat{V}_{p,1}(\cdot,\cdot)$ and $\hat{V}_{p,2}(\cdot,\cdot,\cdot)$ for the estimated Lyapunov function, the control update strategy is required to ensure $V(t) \le \hat{V}_{p,1}(t - t^{\rm{f}}_p, \Delta(t^{\rm{f}}_p) )$ and $V(t) \le \hat{V}_{p-1,2}(t - t^{\rm{f}}_{p-1}, \Delta(t^{\rm{f}}_{p-1}), \eta(\bm{x}(t^{\rm{f}}_{p-1})) )$ for any $t \in [t^{\rm{f}}_p, t^{\rm{f}}_{p+1})$ and $p \in \mathbb{N}_+$.
	Considering more conservatism in $\hat{V}_{p-1,2}(\cdot,\cdot,\cdot)$ than in $\hat{V}_{p,1}(\cdot,\cdot)$ due to the lack of information for $\Delta(t^{\rm{f}}_{p-1})$, only the first requirement for $V(\cdot)$ is considered, which is formally formulated as follows.

	\begin{property} \label{property_ET_control}
		Given the expression of $\hat{V}_{p,1}(\cdot, \cdot)$ at $t^{\rm{f}}_p$ with $p \in \mathbb{N}$, the event-triggered control mechanism guarantees $V(t) \!\le\! \hat{V}_{p,1}(t - t^{\rm{f}}_p, \Delta(t^{\rm{f}}_p) )$ holds for all $t \in [t^{\rm{f}}_p, t^{\rm{f}}_{p+1})$.
	\end{property}

	Inspired by the above discussion, the event-triggered mechanism for asynchronous control and online learning is designed in \cref{subsection_double_trigger_design} with performance analysis.
	
	\subsection{Event-trigger Design and Performance Analysis}
	\label{subsection_double_trigger_design}
	
	Based on the requirements in \cref{property_ET_control,property_ET_learning}, in this subsection a general form of event-triggers for control and online learning is proposed.
	
	Considering the model update decision requires the future evolution estimation for the Lyapunov function, which is based on the control update strategy, we first devise the event-triggered control mechanism from \cref{property_ET_control}.
	Specifically, the time sequence $\{ t^{\rm{u}}_k \}_{k \in \mathbb{N}}$ is determined as $t^{\rm{u}}_0 = 0$ and
	\begin{align} \label{eqn_control_update_condition}
		t^{\rm{u}}_{k+1} = \inf\{ t > t^{\rm{u}}_k: \rho^{\rm{u}}(t) \ge 0 \},
	\end{align}
	where $\rho^{\rm{u}}(t): \mathbb{R}_{0,+} \to \mathbb{R}$ is the trigger function designed as
	\begin{align} \label{eqn_rho_u}
		\rho^{\rm{u}}(t) = V(t) - \hat{V}_{p,1}(t - t^{\rm{f}}_p, t - t^{\rm{f}}_p)
	\end{align}
	for all $t \in [t^{\rm{f}}_p, t^{\rm{f}}_{p+1})$ with $p \in \mathbb{N}$ and .
	The devised event-triggered control strategy induces the following performance.
	
	\begin{lemma} \label{lemma_ET_control}
		Let all assumptions in \cref{theorem_control_performance} hold with event-trigger \eqref{eqn_rho_u} for control updates and any online learning strategy.
		Then, it holds for any $p \in \mathbb{N}$
		\begin{enumerate}
			\item $V(t) \le \hat{V}_{p,1}(t - t^{\rm{f}}_p, \Delta(t^{\rm{f}}_p) )$ for all $t \in [t^{\rm{f}}_p, t^{\rm{f}}_{p+1})$;
			\item $\hat{V}_{p,2}(t \!-\! t^{\rm{f}}_{p + 1},\! \Delta(t^{\rm{f}}_p),\! \eta(\bm{x}(t^{\rm{f}}_p)) ) \!\ge\! \hat{V}_{p + 1,1}(t \!-\! t^{\rm{f}}_{p + 1},\! \Delta(t^{\rm{f}}_{p + 1}) )$ 
			
			for all $t \in [t^{\rm{f}}_{p+1}, t^{\rm{f}}_{p+2})$ 
		\end{enumerate}
		with a probability of at least $1 - \delta$.
	\end{lemma}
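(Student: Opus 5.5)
The plan is to prove both items with the scalar comparison principle for the ODE \eqref{eqn_dynamics_s}. The first step is to record two monotonicity facts for $S(\tau\,|\,s_0,\phi)$. First, $S$ is nondecreasing in $s_0$ and in $\phi$. This follows by comparing two solutions of $\dot s=-\alpha(s)+\gamma(\phi)$: trajectories of a scalar ODE with unique solutions cannot cross, and $\gamma\in\mathcal{K}$ is increasing. Second, $\phi(a,b,c)$ from \cref{lemma_delta_u} is nondecreasing in each argument. As a consequence, $\phi_{p,1}(\cdot)$ is nondecreasing in its argument, and hence so is $\hat V_{p,1}(\tau,\cdot)$. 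Throughout we work on the event of probability at least $1-\delta$ on which \cref{lemma_delta_u} holds.

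\emph{Item 1.} Fix $p$ and $t\in[t^{\rm f}_p,t^{\rm f}_{p+1})$; then $t-t^{\rm f}_p<\Delta(t^{\rm f}_p)$. By monotonicity, $\hat V_{p,1}(t-t^{\rm f}_p,t-t^{\rm f}_p)\le \hat V_{p,1}(t-t^{\rm f}_p,\Delta(t^{\rm f}_p))$. It therefore suffices to show $\rho^{\rm u}(t)\le 0$, i.e.\ $V(t)\le W(t):=\hat V_{p,1}(t-t^{\rm f}_p,t-t^{\rm f}_p)$. I would argue by contradiction.

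\begin{enumerate}
\item Suppose $V$ exceeds $W$ somewhere in the interval, and let $\tau$ be the last time with $V(\tau)=W(\tau)$ before an excursion. The equality at $t^{\rm f}_p$, where $W(t^{\rm f}_p)=V(t^{\rm f}_p)$, is included in this.
\item By \eqref{eqn_control_update_condition}, a control update is triggered at $\tau$, so $t^{\rm u}_k=\tau$.
\item For $s$ slightly larger than $\tau$, \cref{lemma_delta_u} gives $\psi(s)\le\phi(\eta(\bm x(t^{\rm f}_{p-1})),\,\tau-t^{\rm f}_{p-1},\,s-\tau)$. Here the second argument satisfies $\tau-t^{\rm f}_{p-1}=\Delta(t^{\rm f}_{p-1})+(\tau-t^{\rm f}_p)$, and the third argument is close to $0<\varepsilon$.
\item Hence $\dot V(s)\le-\alpha(V(s))+\gamma(\phi_{p,1}(s-t^{\rm f}_p))$, using \cref{assumption_u_performance} and monotonicity of $\phi$.
\item Differentiate $W$ by the chain rule: $\dot W(s)=-\alpha(W(s))+\gamma(\phi_{p,1}(s-t^{\rm f}_p))+\partial_\phi S\cdot\tfrac{d}{ds}\phi_{p,1}$. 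The last term is nonnegative, so $\dot W\ge -\alpha(W)+\gamma(\phi_{p,1})$.
\item Comparison on $[\tau,\tau+\epsilon)$ then keeps $V\le W$ there, which contradicts the excursion.
\end{enumerate}

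\emph{Main obstacle.} The delicate point is the comparison at a touching time, because both ODE bounds share the same right-hand side. To handle it, I will use the strict margin $\phi(\cdot,\cdot,0)<\phi(\cdot,\cdot,\varepsilon)$ when $L_{\pi,t}+L_{\pi,x}F>0$. Otherwise I will use a standard $\epsilon$-perturbation comparison argument. I will also use the minimal interval $\varepsilon$ to ensure the held input remains valid until the next trigger.

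\emph{Item 2.} Both sides are values of $S$ at the same elapsed time $t-t^{\rm f}_{p+1}$. The left side is
\begin{align*}
\hat V_{p,2} = S\bigl(t-t^{\rm f}_{p+1}\,\big|\,\hat V_{p,1}(\Delta(t^{\rm f}_p),\Delta(t^{\rm f}_p)),\ \phi(\eta(\bm x(t^{\rm f}_p)),\Delta(t^{\rm f}_p)+\bar\Delta^+,\varepsilon)\bigr).
\end{align*}
The right side is
\begin{align*}
\hat V_{p+1,1} = S\bigl(t-t^{\rm f}_{p+1}\,\big|\,V(t^{\rm f}_{p+1}),\ \phi(\eta(\bm x(t^{\rm f}_p)),\Delta(t^{\rm f}_p)+\Delta(t^{\rm f}_{p+1}),\varepsilon)\bigr).
\end{align*}
I will compare the two arguments of $S$ separately.
\begin{itemize}
\item \emph{Initial values.} Item 1 holds on $[t^{\rm f}_p,t^{\rm f}_{p+1})$. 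Continuity of $V$ and of $\hat V_{p,1}$ in time, after taking the left limit, gives $V(t^{\rm f}_{p+1})\le\hat V_{p,1}(\Delta(t^{\rm f}_p),\Delta(t^{\rm f}_p))$.
\item \emph{Inputs.} \cref{assumption_computational_time} gives $\Delta(t^{\rm f}_{p+1})\le\bar\Delta^+$, so monotonicity of $\phi$ in its second argument shows the left input is at least the right one.
\end{itemize}
Applying the monotonicity of $S$ in both the initial value and the input yields the claimed inequality for every $t\in[t^{\rm f}_{p+1},t^{\rm f}_{p+2})$. Both items hold on the same probability event, which gives the overall probability of at least $1-\delta$.
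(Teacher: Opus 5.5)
Your proposal is correct and follows essentially the same route as the paper. Item~1 comes from the control trigger keeping $\rho^{\rm{u}}(t)\le 0$, i.e.\ $V(t)\le\hat V_{p,1}(t-t^{\rm{f}}_p,t-t^{\rm{f}}_p)$, upgraded via $t-t^{\rm{f}}_p<\Delta(t^{\rm{f}}_p)$ and monotonicity of $S$ in $\phi$ and of $\phi$ in its arguments; item~2 comes from the left-limit bound $V(t^{\rm{f}}_{p+1})\le\hat V_{p,1}(\Delta(t^{\rm{f}}_p),\Delta(t^{\rm{f}}_p))$, together with $\Delta(t^{\rm{f}}_{p+1})\le\bar\Delta^+$ and monotonicity of $S$ in its initial value and input, and your touching-time comparison for why a trigger restores $V\le W$ is sound (you can avoid differentiating $S$ in $\phi$ by freezing $\phi_{p,1}$ at its value at $\tau$, and the monotonicity of $\alpha$ already makes the non-strict comparison work without a strict margin).
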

	\begin{IEEEproof}
		See appendix.
	\end{IEEEproof}
	
	\cref{lemma_ET_control} shows the boundness of the actual Lyapunov function $V(t)$ by its estimation $\hat{V}_{p,1}(t - t^{\rm{f}}_p, \Delta(t^{\rm{f}}_p))$.
	Moreover, it also proves that $\hat{V}_{p,2}(t - t^{\rm{f}}_{p+1}, \Delta(t^{\rm{f}}_p), \eta(\bm{x}(t^{\rm{f}}_p)) )$ is a conservative estimate of $\hat{V}_{p + 1,1}(t - t^{\rm{f}}_{p + 1}, \Delta(t^{\rm{f}}_{p + 1}) )$ considering the second statement in \cref{lemma_ET_control}.
	With \cref{lemma_ET_control}, the requirement for online learning in \cref{property_ET_learning} can be simplified as
	\begin{align} \label{eqn_V_hat_V}
		\hat{V}(t) \le \bar{V}(t), &&\forall t \in \mathbb{R}_{0,+},
	\end{align}
	where the function $\hat{V}(\cdot): \mathbb{R}_{0,+} \to \mathbb{R}$ is defined as $\hat{V}(t) = \hat{V}_{0,1}(t, \Delta(t^{\rm{f}}_0))$ for $t \in [0, t^{\rm{f}}_1)$ with $\hat{V}_{0,1}(\cdot,\cdot)$ in \eqref{eqn_hat_V_1} and
	\begin{align} \label{eqn_hat_V}
		\hat{V}(t) = \hat{V}_{p,2}(t - t^{\rm{f}}_{p + 1} , \Delta(t^{\rm{f}}_p), \eta(\bm{x}(t^{\rm{f}}_p)) ),
	\end{align}
	for $t \in [t^{\rm{f}}_{p+1}, t^{\rm{f}}_{p+2})$ with $p \in \mathbb{N}$.
	Due to the piece-wise form of $\hat{V}(t)$, the inequality in \eqref{eqn_V_hat_V} is based on the initial condition on $\bm{f}_{-1}$ discussed in \cref{corollary_time_trigger_performance} for $t \in [0, t^{\rm{f}}_1)$, and the design of event-triggered online learning for $t \in [t^{\rm{f}}_1, \infty)$.
	Specifically, the GP model represented by the employed data set is updated following event-triggered online learning strategy as
	\begin{align} \label{eqn_dataset_ET_update}
		\mathbb{D}(t^{\rm{f}}_p) \!=\! \begin{cases}
			\{ \mathbb{D}(t^{\rm{f}}_{p-1}), \{ \bm{x}(t^{\rm{f}}_p), \bm{y}(t^{\rm{f}}_p) \} \} &\!\!\!\!, \text{if} ~ \rho^{\rm{f}}(t^{\rm{f}}_p) \!\ge\! 0 \\
			\mathbb{D}(t^{\rm{f}}_{p-1}) &\!\!\!\!, \text{otherwise}
		\end{cases}
	\end{align}
	for $p \in \mathbb{N}$ with $\mathbb{D}(t^{\rm{f}}_{-1}) = \mathbb{D}_{-1}$ as the initial data set, where $\rho^{\rm{f}}(\cdot): \mathbb{R}_{0,+} \to \mathbb{R}$ denotes the trigger function.
	Combining the event-triggered online learning strategy in \eqref{eqn_dataset_ET_update} with \cref{assumption_computational_time}, the upper bound of the computation time $\Delta(t^{\rm{f}}_p)$ for all $p \in \mathbb{N}$ is written as
	\begin{align} \label{eqn_Delta_bar}
		\Delta(t^{\rm{f}}_p) \le \bar{\Delta}(t^{\rm{f}}_p) = \begin{cases}
			\bar{\Delta}^+ = \bar{\Delta}^{\rm{p}} + \bar{\Delta}^{\rm{u}} &, \text{if}~\rho^{\rm{f}}(t^{\rm{f}}_p) > 0 \\
			\bar{\Delta}^{\rm{p}} &, \text{otherwise}
		\end{cases}.
	\end{align}
	Considering the different upper bound of computation time in \eqref{eqn_Delta_bar} for cases with/without online model update, the event-trigger function $\rho^{\rm{f}}(\cdot)$ in \eqref{eqn_dataset_ET_update} is designed for all $p \in \mathbb{N}$ as
	\begin{align} \label{eqn_rho_f}
		\rho^{\rm{f}}(t^{\rm{f}}_p) = &\max\nolimits_{ \tilde{\Delta}_1 \in [0, \bar{\Delta}^{\rm{p}}], \tilde{\Delta}_2 \in [0, \bar{\Delta}^+] }  \\
		&~~~~~~ \big( \hat{V}_{p,2}^-(\tilde{\Delta}_1, \tilde{\Delta}_2) - \bar{V}(t^{\rm{f}}_p + \tilde{\Delta}_1 + \tilde{\Delta}_2) \big), \nonumber
	\end{align}
	where $\hat{V}_{p,2}^-\!(\tilde{\Delta}_1,\! \tilde{\Delta}_2) \!=\! \hat{V}_{p,2}\!(\tilde{\Delta}_2, \tilde{\Delta}_1, \eta^-\!(\bm{x}(t^{\rm{f}}_p)))$ and $\eta^-(\bm{x}(t^{\rm{f}}_p))$ is the prediction error bound at $\bm{x}(t^{\rm{f}}_p)$ using $\mathbb{D}(\bm{x}(t^{\rm{f}}_{p-1}))$.
	Intuitively, $\hat{V}_{p,2}^-(\tilde{\Delta}_1, \tilde{\Delta}_2)$ represents the estimated value of Lyapunov function $V(\cdot)$ at $t^{\rm{f}}_p + \tilde{\Delta}_1 + \tilde{\Delta}_2$ without online learning at $t^{\rm{f}}_p$.
	Moreover, considering $\bar{V}(\cdot)$ is the desired upper bound of $V(\cdot)$, a positive $\rho^{\rm{f}}(t^{\rm{f}}_p)$ indicates the desired tracking performance is not guaranteed.	
	
	Moreover, the following performance holds with the proposed event-triggered online learning with \eqref{eqn_rho_f}.
	
	\begin{lemma} \label{lemma_ET_learning}
		Let all assumptions in \cref{theorem_control_performance} hold.
		If $\hat{V}_{p,1}(\Delta(t^{\rm{f}}_p), \Delta(t^{\rm{f}}_p)) \le \bar{V}(t^{\rm{f}}_p + \Delta(t^{\rm{f}}_p))$ with $p \in \mathbb{N}$, the event-triggered online learning mechanism ensures $\hat{V}(t) \le \bar{V}(t)$ for all $t \in [t^{\rm{f}}_p + \Delta(t^{\rm{f}}_p), t^{\rm{f}}_p + \Delta(t^{\rm{f}}_p) + \bar{\Delta}^+)$.
	\end{lemma}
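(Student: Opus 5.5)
The plan is to split according to the outcome of the learning trigger \eqref{eqn_dataset_ET_update} at $t^{\rm{f}}_p$. On the interval in question, $t^{\rm{f}}_{p+1} = t^{\rm{f}}_p + \Delta(t^{\rm{f}}_p)$, so by \eqref{eqn_hat_V} we have $\hat{V}(t) = \hat{V}_{p,2}(t - t^{\rm{f}}_{p+1}, \Delta(t^{\rm{f}}_p), \eta(\bm{x}(t^{\rm{f}}_p)))$ for $t - t^{\rm{f}}_{p+1} \in [0, \bar{\Delta}^+)$. By definition, $\hat{V}_{p,2}$ is the solution $S(\cdot \,|\, \hat{V}_{p,1}(\Delta(t^{\rm{f}}_p), \Delta(t^{\rm{f}}_p)), \phi_{p,2}(\eta(\bm{x}(t^{\rm{f}}_p)), \Delta(t^{\rm{f}}_p)))$ of \eqref{eqn_dynamics_s}, evaluated at $t - t^{\rm{f}}_{p+1}$. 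The goal in each case is to compare this with $\bar{V}(t)$.

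\emph{Case 1: no model update} ($\rho^{\rm{f}}(t^{\rm{f}}_p) < 0$). The data set is then unchanged, so $\eta(\bm{x}(t^{\rm{f}}_p)) = \eta^-(\bm{x}(t^{\rm{f}}_p))$. By \eqref{eqn_Delta_bar}, $\Delta(t^{\rm{f}}_p) \le \bar{\Delta}^{\rm{p}}$. Take $\tilde{\Delta}_1 = \Delta(t^{\rm{f}}_p) \in [0, \bar{\Delta}^{\rm{p}}]$ and $\tilde{\Delta}_2 = t - t^{\rm{f}}_{p+1} \in [0, \bar{\Delta}^+]$. These are admissible in the maximization \eqref{eqn_rho_f}. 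Since the maximum is negative, we get $\hat{V}_{p,2}^-(\tilde{\Delta}_1, \tilde{\Delta}_2) < \bar{V}(t^{\rm{f}}_p + \tilde{\Delta}_1 + \tilde{\Delta}_2) = \bar{V}(t)$. The left-hand side is exactly $\hat{V}(t)$, so this case is immediate. It does not even use the hypothesis on $\hat{V}_{p,1}$.

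\emph{Case 2: model update} ($\rho^{\rm{f}}(t^{\rm{f}}_p) \ge 0$). Here \cref{corollary_online_GP_error_bound} gives $\eta(\bm{x}(t^{\rm{f}}_p)) \le \underline{\eta}$ with probability at least $1-\delta/2$, and \cref{assumption_computational_time} gives $\Delta(t^{\rm{f}}_p) \le \bar{\Delta}^+$. The function $\phi(a,b,c)$ is nondecreasing in each argument, because all its constants are nonnegative and $\sqrt{\cdot}$ is increasing. Hence
\begin{align*}
\phi_{p,2}(\eta(\bm{x}(t^{\rm{f}}_p)), \Delta(t^{\rm{f}}_p)) = \phi(\eta(\bm{x}(t^{\rm{f}}_p)), \Delta(t^{\rm{f}}_p) + \bar{\Delta}^+, \varepsilon) \le \phi(\underline{\eta}, 2\bar{\Delta}^+, \varepsilon) = \underline{\phi}.
\end{align*}
Since $\gamma \in \mathcal{K}$, this yields $\gamma(\phi_{p,2}) \le \gamma(\underline{\phi})$. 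Now compare two scalar ODEs on $[t^{\rm{f}}_{p+1}, t^{\rm{f}}_{p+1} + \bar{\Delta}^+)$:
\begin{align*}
\dot{\hat{V}} &= -\alpha(\hat{V}) + \gamma(\phi_{p,2}), & \hat{V}(t^{\rm{f}}_{p+1}) &= \hat{V}_{p,1}(\Delta(t^{\rm{f}}_p), \Delta(t^{\rm{f}}_p)),\\
\dot{\bar{V}} &= -\alpha(\bar{V}) + \gamma(\underline{\phi}), & \bar{V}(t^{\rm{f}}_{p+1}) &= \bar{V}(t^{\rm{f}}_p + \Delta(t^{\rm{f}}_p)).
\end{align*}
By the hypothesis of the lemma, the first initial value is at most the second. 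The comparison lemma then gives $\hat{V}(t) \le \bar{V}(t)$ on the whole interval.

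\textbf{Main obstacle.} The delicate step is the comparison argument, since $\alpha$ is only class $\mathcal{K}_\infty$ and need not be Lipschitz, so uniqueness of solutions is not given a priori. I would argue directly in one dimension. Suppose the difference $\hat{V} - \bar{V}$ were positive somewhere. Let $\tau$ be the last time before that point at which the two coincide. At $\tau$, the vector fields satisfy $-\alpha(\hat{V}) + \gamma(\phi_{p,2}) \le -\alpha(\bar{V}) + \gamma(\underline{\phi})$. For $\hat{V} > \bar{V}$ just after $\tau$, monotonicity of $\alpha$ keeps the derivative of the difference nonpositive, which is a contradiction. If needed, one can alternatively perturb $\gamma(\underline{\phi})$ by a small $\epsilon$ and let $\epsilon \to 0$.

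The probability statement follows because the only stochastic ingredient is \cref{corollary_online_GP_error_bound}, whose event is already contained in the $1-\delta$ event of \cref{theorem_control_performance}.
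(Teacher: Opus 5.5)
Your proof is correct and follows essentially the route the paper's construction is built for. You split on the outcome of the learning trigger and get the no-update case directly from the maximization in \eqref{eqn_rho_f} with $\tilde{\Delta}_1 = \Delta(t^{\rm{f}}_p)$, $\tilde{\Delta}_2 = t - t^{\rm{f}}_{p+1}$ and $\eta = \eta^-$; in the update case you use \cref{corollary_online_GP_error_bound} and $\Delta(t^{\rm{f}}_p) \le \bar{\Delta}^+$ to obtain $\phi_{p,2} \le \underline{\phi}$, then compare with $\bar{V}$ using the hypothesis as the initial ordering. (The appendix proof is not in the source I was given, so I am matching against the construction of $\rho^{\rm{f}}$ and \cref{property_ET_learning} rather than a written proof.)
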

	\begin{IEEEproof}
		See appendix.
	\end{IEEEproof}
	
	Next, the joint performance of the closed loop system using the learning-based control law $\bm{\pi}(\cdot,\cdot,\cdot)$ and event-triggered mechanism with \eqref{eqn_rho_u} and \eqref{eqn_rho_f} is shown as follows.
	
	\begin{theorem} \label{theorem_double_trigger}
		Consider a dynamical system \eqref{eqn_general_system} under \cref{assumption_Lh}, which is controlled to track a pre-defined trajectory with the control law $\bm{\pi}(\cdot, \cdot, \cdot)$ satisfying \cref{assumption_u,assumption_u_performance}.
		The unknown function $\bm{f}(\cdot)$ is predicted via Gaussian process satisfying \cref{assumption_GP} using a data set satisfying \cref{assumption_dataset}, whose computation time follows \cref{assumption_computational_time}.
		Suppose the event-triggered mechanism for control and online learning is designed with \eqref{eqn_rho_u} and \eqref{eqn_rho_f}.
		Employ $\hat{\bm{f}}_{-1} \in \mathbb{R}^n$ satisfying \eqref{eqn_initial_prediction_error_bound} in $\bm{\pi}(\cdot,\cdot,\cdot)$ during $[t^{\rm{f}}_0, t^{\rm{f}}_1)$, then it holds
		\begin{align}
			V(t) \le \hat{V}(t) \le \bar{V}(t)
		\end{align}
		with $\hat{V}(\cdot)$, $\bar{V}(\cdot)$ in \eqref{eqn_hat_V}, \eqref{eqn_e_bar_dynamics} respectively, and therefore $e(t) \le \bar{e}(t)$ for all $t \in \mathbb{R}_{0,+}$ with a probability of at least $1 - \delta$.
		Furthermore, the trigger intervals for both control and GP update are lower bounded by a strictly positive constant with probability of at least $1 - \delta$.
	\end{theorem}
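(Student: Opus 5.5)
The plan is to prove the sandwich $V(t)\le\hat V(t)\le\bar V(t)$ by induction over the learning intervals $[t^{\rm{f}}_p,t^{\rm{f}}_{p+1})$. The tracking bound then follows from $\alpha_1(e(t))\le V(t)\le\bar V(t)$ and the monotonicity of $\alpha_1^{-1}$, exactly as in \cref{theorem_control_performance}.

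\emph{Probability.} I would first fix a single high-probability event and work on it throughout. On this event, the GP error bounds of \cref{lemma_GP_error_bound,corollary_online_GP_error_bound} hold at all $t^{\rm{f}}_p$, the initial-guess condition \eqref{eqn_initial_prediction_error_bound} holds, and hence the input-mismatch bound of \cref{lemma_delta_u} holds. Since \cref{lemma_delta_u} is already stated with probability $1-\delta$ uniformly in $p$, this event has probability at least $1-\delta$ and no further union bound is needed.

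\emph{First inequality $V\le\hat V$.} On $[0,t^{\rm{f}}_1)$, $\hat V=\hat V_{0,1}$, so statement 1) of \cref{lemma_ET_control} with $p=0$ gives $V\le\hat V$ there. On $[t^{\rm{f}}_{p+1},t^{\rm{f}}_{p+2})$ I would chain statement 1) at index $p+1$ with statement 2) at index $p$:
\[
V(t)\le\hat V_{p+1,1}\big(t-t^{\rm{f}}_{p+1},\Delta(t^{\rm{f}}_{p+1})\big)\le\hat V_{p,2}\big(t-t^{\rm{f}}_{p+1},\Delta(t^{\rm{f}}_p),\eta(\bm x(t^{\rm{f}}_p))\big)=\hat V(t).
\]

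\emph{Second inequality $\hat V\le\bar V$, base step.} On $[0,t^{\rm{f}}_1)$ I would reuse the argument of \cref{corollary_time_trigger_performance}. The choice of $\eta_{-1}$ in \eqref{eqn_initial_prediction_error_bound} makes the rate parameter of $\hat V_{0,1}$ no larger than $\underline\phi$. Moreover $\hat V_{0,1}$ and $\bar V$ start from the same value $V(0)$. A comparison lemma for the scalar ODE \eqref{eqn_dynamics_s} then closes the step: $S(t\,|\,s_0,\phi)$ is nondecreasing in $s_0$ and, since $\gamma\in\mathcal K$, nondecreasing in $\phi$. This gives $\hat V_{0,1}(t,\Delta(t^{\rm{f}}_0))\le\bar V(t)$.

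\emph{Second inequality, inductive step.} Assume $\hat V\le\bar V$ up to $t^{\rm{f}}_{p+1}$. I need the hypothesis of \cref{lemma_ET_learning}, namely $\hat V_{p,1}(\Delta(t^{\rm{f}}_p),\Delta(t^{\rm{f}}_p))\le\bar V(t^{\rm{f}}_{p+1})$.
\begin{itemize}
\item For $p=0$ it is the base step evaluated at $t^{\rm{f}}_1$.
\item For $p\ge1$, take the left limit $t\to(t^{\rm{f}}_{p+1})^-$ in statement 2) of \cref{lemma_ET_control} at index $p-1$. By continuity of $S$ in time, this bounds $\hat V_{p,1}(\Delta(t^{\rm{f}}_p),\Delta(t^{\rm{f}}_p))$ by the left limit of $\hat V$ at $t^{\rm{f}}_{p+1}$. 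That limit is at most $\bar V(t^{\rm{f}}_{p+1})$ by the induction hypothesis and continuity of $\bar V$.
\end{itemize}
\cref{lemma_ET_learning} then gives $\hat V\le\bar V$ on $[t^{\rm{f}}_{p+1},t^{\rm{f}}_{p+1}+\bar\Delta^+)$. Since $\Delta(t^{\rm{f}}_{p+1})\le\bar\Delta^+$, this interval contains $[t^{\rm{f}}_{p+1},t^{\rm{f}}_{p+2})$. Care is needed only to keep the strict-versus-nonstrict trigger cases of \eqref{eqn_Delta_bar} consistent with the maximization ranges in \eqref{eqn_rho_f}.

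\emph{Zeno exclusion.} This is the main obstacle. For control, after an update at $t^{\rm{u}}_k\in[t^{\rm{f}}_p,t^{\rm{f}}_{p+1})$, \cref{lemma_delta_u} gives $\psi(t)\le\phi(\cdot,\cdot,t-t^{\rm{u}}_k)$. For $t-t^{\rm{u}}_k\le\varepsilon$ this is no larger than the rate $\phi_{p,1}$ used in $\hat V_{p,1}$, because $\phi$ is increasing in its last two arguments. Comparison of $\dot V\le-\alpha(V)+\gamma(\psi)$ with \eqref{eqn_dynamics_s} then gives $\rho^{\rm{u}}\le0$ on a window of length $\varepsilon$. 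The delicate points are twofold. First, the second argument of $\hat V_{p,1}$ is $t-t^{\rm{f}}_p$, not $\Delta(t^{\rm{f}}_p)$. Second, $\rho^{\rm{u}}(t^{\rm{f}}_p)=0$ because the bound is reset at each $t^{\rm{f}}_p$. I would handle both by showing the trigger can fire at most once per reset within any $\varepsilon$-window. The inter-event time is then bounded below by $\min\{\varepsilon,\inf_p\Delta(t^{\rm{f}}_p)\}$. For learning, the candidate trigger instants are the $t^{\rm{f}}_p$, whose spacing is the computation time $\Delta(t^{\rm{f}}_p)$ (at least the prediction time). I expect to need a positive lower bound on the prediction time $\Delta^{\rm{p}}$, which physically holds for any nontrivial computation. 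If no such bound is available, I would first try to argue that $\rho^{\rm{f}}$ stays negative for a uniform time after each update, using the robustness margin $\underline\eta<\eta$ from \cref{corollary_online_GP_error_bound}.
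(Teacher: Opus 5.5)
Your proof of $V(t)\le\hat V(t)\le\bar V(t)$ is sound and matches the paper's route. You chain statement 1) of Lemma~\ref{lemma_ET_control} at index $p+1$ with statement 2) at index $p$. The base case on $[0,t^{\rm{f}}_1)$ comes from $\phi_{0,1}\le\underline{\phi}$, which is exactly what the choice of $\eta_{-1}$ in \eqref{eqn_initial_prediction_error_bound} achieves, plus monotonicity of $S$. You then propagate with Lemma~\ref{lemma_ET_learning}, whose hypothesis you correctly recover from the left limit of statement 2).

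A small point on probability: the uniformity over $p$ comes from Lemma~\ref{lemma_GP_error_bound} and Corollary~\ref{corollary_online_GP_error_bound}, not from Lemma~\ref{lemma_delta_u}, which is stated for given $k,p$. Also, \eqref{eqn_initial_prediction_error_bound} must hold on that same event, e.g.\ via $\hat{\bm{f}}_{-1}=\bm{\mu}(\bm{x}(0),\mathbb{D}(0))$; otherwise you end up with $1-2\delta$.

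The genuine gap is in the Zeno part. You correctly note $\rho^{\rm{u}}(t^{\rm{f}}_p)=0$. But then \eqref{eqn_control_update_condition} forces a control event at every learning instant. Meanwhile a regular event can occur anywhere in $[t^{\rm{f}}_p+\varepsilon,t^{\rm{f}}_{p+1})$, in particular at $t^{\rm{f}}_{p+1}-\theta$ for arbitrarily small $\theta>0$, and the forced event at $t^{\rm{f}}_{p+1}$ then follows after $\theta$. For a concrete case, take $\Delta(t^{\rm{f}}_p)=\varepsilon+\theta$ with an event at $t^{\rm{f}}_p+\varepsilon$. So ``at most one event per reset in each $\varepsilon$-window'' does not give the uniform bound $\min\{\varepsilon,\inf_p\Delta(t^{\rm{f}}_p)\}$. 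It only gives finitely many control events per learning interval. Zeno-freeness on bounded horizons additionally needs the $t^{\rm{f}}_p$ not to accumulate, and Assumption~\ref{assumption_computational_time}, which gives upper bounds only, does not supply this.

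What you can prove, and what the note after the theorem actually claims, is an $\varepsilon$ dwell time between control events inside each $[t^{\rm{f}}_p,t^{\rm{f}}_{p+1})$. Your first ``delicate point'' disappears there:
\begin{itemize}
\item For $t^{\rm{u}}_k\in[t^{\rm{f}}_p,t^{\rm{f}}_{p+1})$ and $t-t^{\rm{u}}_k\le\varepsilon$, Lemma~\ref{lemma_delta_u} gives $\psi(t)\le\phi(\eta(\bm{x}(t^{\rm{f}}_{p-1})),t^{\rm{u}}_k-t^{\rm{f}}_{p-1},t-t^{\rm{u}}_k)\le\phi_{p,1}(t-t^{\rm{f}}_p)$. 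This holds because $(t^{\rm{u}}_k-t^{\rm{f}}_{p-1})+(t-t^{\rm{u}}_k)=\Delta(t^{\rm{f}}_{p-1})+(t-t^{\rm{f}}_p)$.
\item Fix $\tau^*$ in the window. Monotonicity of $\phi_{p,1}$, monotonicity of $S$ in $s_0$ and in $\phi$, and the semigroup property of $S$ give $V(t^{\rm{f}}_p+\tau^*)\le S(\tau^*\,|\,V(t^{\rm{f}}_p),\phi_{p,1}(\tau^*))=\hat V_{p,1}(\tau^*,\tau^*)$.
\item Hence $\rho^{\rm{u}}\le 0$ on the window. You need this strictly for $t>t^{\rm{u}}_k$, because the trigger fires on $\rho^{\rm{u}}\ge0$. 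The extra $\varepsilon$ under the square root in $\phi_{p,1}$ provides strictness whenever $L_{\pi,f}L_fF>0$.
\end{itemize}
Beyond that, both the control claim and the GP-update claim require non-accumulation of the $t^{\rm{f}}_p$. That means an explicit positive lower bound on the computation time, which the stated assumptions lack, unless you make your margin-based fallback rigorous. The control dwell time should be stated per learning interval, not globally.
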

	\begin{IEEEproof}
		See appendix.
	\end{IEEEproof}
	
	\cref{theorem_double_trigger} shows that the control performance in terms of the bounded tracking error in \cref{theorem_control_performance} is achieved through the proposed event-triggered strategy.
	Note that the strict positivity of $\varepsilon$ ensures strictly positive inter-event times between two transmission of control inputs in $[t^{\rm{f}}_p, t^{\rm{f}}_{p+1})$ for any $p \in \mathbb{N}$, avoiding infinite transmission in any finite time interval resulting in the exclusion of Zeno behavior.
	
	\begin{remark}
		In this work, the event-triggered control law \eqref{eqn_control_update_condition} is applied to the sensor-to-controller (S-C) channel as in \cref{figure_networked_architecture}, to reduce the frequency of state transmissions.
		To alleviate the communication burden at the controller-to-actuator (C-A) channel, its transmission strategy can be designed such that control inputs are transmitted only when a new state is received or GP prediction is updated.
		Note that the proposed event-triggered control mechanism can also be deployed at the C-A channel. 
		However, this may lead to a large computational burden at the S-C channel, due to the continuous state information required to evaluate the triggering condition \eqref{eqn_control_update_condition}.
	\end{remark}
	
	Note that the evaluation of the trigger functions $\rho^{\rm{u}}(\cdot)$ in \eqref{eqn_rho_u} and $\rho^{\rm{f}}(\cdot)$ in \eqref{eqn_rho_f} requires a numerical evaluation of $S(\cdot|\cdot,\cdot)$ and solving the optimization problem in \eqref{eqn_rho_f}, which induces additional computation time.
	In the next subsection, the special case of exponentially stable control laws is considered to derive closed forms for $\rho^{\rm{u}}(\cdot)$ and $\rho^{\rm{f}}(\cdot)$.
	
	\subsection{Event-Trigger for Exponentially Stabilizable System}
	\label{subsection_double_trigger_exp_stable}
	
	In this subsection, the closed form of the tracking error bound $\bar{e}(\cdot)$ and the trigger functions $\rho^{\rm{f}}(\cdot)$, $\rho^{\rm{u}}(\cdot)$ are obtained by considering specific functions for $\alpha_1(\cdot)$, $\alpha_2(\cdot)$, $\alpha(\cdot)$ and $\gamma(\cdot)$ in \cref{assumption_u_performance} as follows.
	\begin{assumption} \label{assumption_alpha_exp_stable}
		There exist constants $\lambda_1, \lambda_2, \lambda, \lambda_{\gamma} \in \mathbb{R}_+$ such that $\alpha_1(s) = \lambda_1 s^2$, $\alpha_2(s) = \lambda_2 s^2$, $\alpha(s) = \lambda s$ and $\gamma(s) = \lambda_{\gamma} s^2$ for all $s \in \mathbb{R}_{0,+}$.
	\end{assumption} 
	\cref{assumption_alpha_exp_stable} indicates the controlled system with full knowledge of $\bm{f}(\cdot)$ is exponentially stable \cite{khalil2015nonlinear}, which can be realized by common applied model-based non-linear control law such as feedback linearization \cite{dai2023can}, back-stepping \cite{jiao2022backstepping} and computed torque control \cite{yang2025safe}.
	
	Given the specific form in \cref{assumption_alpha_exp_stable}, the analytical expression of $S(t|s_0,\phi)$ in \eqref{eqn_dynamics_s} is written as
	\begin{align} \label{eqn_solution_S_exp_stable}
		S_{\exp}(t|s_0,\phi) = \mathrm{e}^{- \lambda t} s_0 + (1 - \mathrm{e}^{- \lambda t}) \lambda^{-1} \lambda_{\gamma} \phi^2.
	\end{align}
	With $S_{\exp}( \cdot | \cdot, \cdot)$, the upper bounds of the Lyapunov function, i.e., $\bar{V}(t)$ and the tracking error, i.e., $\bar{e}(\cdot)$, in \cref{theorem_control_performance,theorem_double_trigger} have closed-form expressions as
	\begin{align}
		\bar{V}(t) &\!=\! \mathrm{e}^{- \lambda t} V(0) + (1 - \mathrm{e}^{- \lambda t}) \lambda^{-1} \lambda_{\gamma} \underline{\phi}^2, \\
		\bar{e}(t) &\!=\! \lambda_1^{-1 / 2} \bar{V}^{1 / 2}(t) \!\le\! (\lambda_2 / \lambda_1)^{\frac{1}{2}} \mathrm{e}^{- \frac{\lambda}{2} t} e(0) \!+\! ( \lambda_{\gamma} / (\lambda \lambda_1) )^{\frac{1}{2}} \underline{\phi}, \nonumber
	\end{align}
	resulting in input-to-state stability of the tracking error \cite{khalil2015nonlinear}.
	
	Moreover, with the closed form $S_{\exp}(\cdot| \cdot, \cdot)$ the functions $\rho^{\rm{f}}(\cdot)$ and $\rho^{\rm{u}}(\cdot)$ have analytical expressions.
	Specifically, with \cref{assumption_alpha_exp_stable} the event trigger function $\rho^{\rm{u}}(\cdot)$ for control update in \eqref{eqn_rho_u} is written as
	\begin{align} \label{eqn_rho_u_exp_stab}
		\rho^{\rm{u}}_{\exp}(t) = V(t) - S_{\exp}(t - t^{\rm{f}}_p | V(t^{\rm{f}}_p), \phi_{p,1} ).
	\end{align}
	The trigger function $\rho^{\rm{f}}(\cdot)$ in \eqref{eqn_rho_f} for model update becomes
	\begin{align} \label{eqn_rho_f_exp_stab}
		&\rho^{\rm{f}}_{\exp}(t^{\rm{f}}_p) = - S_{\exp}(\hat{\Delta}_{p,2} | \tilde{V}_{p,1}(\hat{\Delta}_{p,1}), \underline{\phi} - \phi_{p,2}^-),
	\end{align}
	where $\phi_{p,2}^- = \phi(\eta^-(\bm{x}(t^{\rm{f}}_p)), \Delta(t^{\rm{f}}_p) + \bar{\Delta}^+, \varepsilon) $ and
	\begin{align} \label{eqn_tilde_e_1_plus}
		\tilde{V}_{p,1}(\hat{\Delta}_{p,1}) = S_{\exp}(\hat{\Delta}_{p,1} | \bar{V}(t^{\rm{f}}_p) - V(t^{\rm{f}}_p), \underline{\phi} - \phi_{p,1}).
	\end{align}
	The variables $\hat{\Delta}_{p,1}$ and $\hat{\Delta}_{p,2}$ in \eqref{eqn_rho_f_exp_stab} are defined as
	\begin{align} \label{eqn_hat_Delta} 
		\hat{\Delta}_{p,1} \!\!=\!\! \begin{cases}
			0 & \!\!\!\!\!\!, \text{if}~ \Xi_{p,1} \!\!>\! 0 \\
			\bar{\Delta}^{\rm{p}} & \!\!\!\!\!\!, \text{otherwise}
		\end{cases}, && 
		\hat{\Delta}_{p,2} \!\!=\!\! \begin{cases}
			0 & \!\!\!\!\!\!, \text{if}~ \Xi_{p,2} \!\!>\! 0 \\
			\bar{\Delta}^+ & \!\!\!\!\!\!, \text{otherwise}
		\end{cases}, 
	\end{align}
	where $\Xi_{p,1}$ and $\Xi_{p,2}$ are defined with the information at $t^{\rm{f}}_p$ as
	\begin{align}
		\Xi_{p,1} =& \lambda (\bar{V}(t^{\rm{f}}_p) - V(t^{\rm{f}}_p)) - \lambda_{\gamma} (\underline{\phi}^2 - \phi_{p,1}^2 ), \label{eqn_theta_1} \\
		\Xi_{p,2} =& \lambda \tilde{V}_{p,1} - \lambda_{\gamma} ( \underline{\phi}^2 - (\phi_{p,2}^-)^2 ). \label{eqn_theta_2}
	\end{align}
	The event-triggered approach for exponentially stable nominal control laws is summarized in \cref{algorithm_double_trigger}.
	
	\begin{algorithm} [t]
		\caption{Event-trigger Mechanism}
		\label{algorithm_double_trigger}
		\begin{algorithmic} [1]
			\Statex \hspace*{-\algorithmicindent} \textbf{I. Event-triggered Online Learning}
			\Statex \textbf{Input:} $\mathbb{D}(t^{\rm{f}}_{p\!-\!1}),\! \bm{x}(t^{\rm{f}}_{p\!-\!1}),\! \bm{\mu}(\bm{x}(t^{\rm{f}}_{p\!-\!1})),\! \eta(\bm{x}(t^{\rm{f}}_{p\!-\!1}))$; $t^{\rm{f}}_p, \bm{x}(t^{\rm{f}}_p)$;
			\State $\Xi_{p,1} \leftarrow$ \eqref{eqn_theta_1}; $\hat{\Delta}_{p,1} \leftarrow$ \eqref{eqn_hat_Delta}; $\tilde{V}_{p,1}(\hat{\Delta}_{p,1}) \leftarrow$ \eqref{eqn_tilde_e_1_plus};
			\State $\bm{\mu}(\bm{x}(t^{\rm{f}}_p)), \bm{\Sigma}(\bm{x}(t^{\rm{f}}_p)), \eta(\bm{x}(t^{\rm{f}}_p)) \!\leftarrow\!$ \eqref{eqn_GP_prediction}, \eqref{eqn_GP_error_bound} with $\mathbb{D}(t^{\rm{f}}_{p-1})$;
			\State $\Xi_{p,2} \leftarrow$ \eqref{eqn_theta_2}; $\hat{\Delta}_{p,2} \leftarrow$ \eqref{eqn_hat_Delta}; $\rho^{\rm{f}}_{\exp}(t^{\rm{f}}_p) \!\leftarrow\!$ \eqref{eqn_rho_f_exp_stab};
			\If {$\rho^{\rm{f}}_{\exp}(t^{\rm{f}}_p) > 0$}
			\State Update GP model with $\mathbb{D}(t^{\rm{f}}_p) \leftarrow$ \eqref{eqn_dataset_ET_update}; 
			\State Update $\bm{\mu}(\bm{x}(t^{\rm{f}}_p)), \!\bm{\Sigma}(\bm{x}(t^{\rm{f}}_p)) \!\leftarrow\!$ \eqref{eqn_GP_prediction} with $\mathbb{D}(t^{\rm{f}}_p)$;
			\State Update $\eta(\bm{x}(t^{\rm{f}}_p)) \leftarrow$ \eqref{eqn_GP_error_bound} with $\mathbb{D}(t^{\rm{f}}_p)$;
			\EndIf
			\State Send $\bm{\mu}(\bm{x}(t^{\rm{f}}_p))$ and $\eta(\bm{x}(t^{\rm{f}}_p))$ to event-triggered control;
			\State Save $\bm{x}(t^{\rm{f}}_p), \bm{\mu}(\bm{x}(t^{\rm{f}}_p)), \eta(\bm{x}(t^{\rm{f}}_p))$.
			\vspace*{-0.2cm}
			\Statex \hspace*{\dimexpr-\algorithmicindent-2pt\relax}\rule{0.48\textwidth}{0.4pt}
			\vspace*{-0.1cm}
		\end{algorithmic}
		\begin{algorithmic} [1]
			\Statex \hspace*{-\algorithmicindent} \textbf{II. Event-triggered Control}
			\State $\rho^{\rm{u}}_{\exp}(t) \leftarrow$ \eqref{eqn_rho_u_exp_stab} for current time $t$;
			\If {$\rho^{\rm{u}}_{\exp}(t) > 0$}
			\State Calculate $\bm{\pi}(t, \bm{x}(t), \bm{\mu}(\bm{x}(t^{\rm{f}}_{p-1})))$ and send to system;
			\EndIf
			\If {$t = t^{\rm{f}}_p$}
			\State Receive $\bm{\mu}(\bm{x}(t^{\rm{f}}_{p-1}))$ and $\eta(\bm{x}(t^{\rm{f}}_{p-1}))$ from GP;
			\State Save $t^{\rm{f}}_p$, $\bar{V}(t^{\rm{f}}_p)$, $V(t^{\rm{f}}_p)$ and $\phi_{p,1}$;
			\EndIf
		\end{algorithmic}
	\end{algorithm}
	
	\section{Simulations on Franka Emika Panda}
	\label{section_simulation}
	
	\subsection{Simulation Setting}
	
	To show the effectiveness of the proposed method in practical scenarios, simulations on a commercial manipulator with $7$ degrees of freedom (DoFs), namely Franka Emika Panda, is considered.
	The dynamics of Franka Emika Panda in joint space is modeled following Euler-Lagrange system as
	\begin{align} \label{eqn_Franka_EL_dynamics}
		\bm{M}(\bm{q}) + \bm{f}_q(\bm{x}) +\bm{u} = \bm{0}_{7 \times 1},
	\end{align}
	where the generalized coordinate $\bm{q} \in \mathbb{R}^7$ is defined as the rotation angle of each joint and the system state denotes $\bm{x} = [\bm{q}^T, \dot{\bm{q}}^T] \in \mathbb{X} \subset \mathbb{R}^{14}$.
	The state domain $\mathbb{X}$ is determined by the physical constraints for joint position and velocity, such that $\mathbb{X}$ is compact and known.
	The control input $\bm{u} \in \mathbb{R}^7$ represents the torque applied on each joint.
	The mass matrix $\bm{M}(\cdot): \mathbb{R}^7 \to \mathbb{R}^{7 \times 7}$ reflects the inertial property and is assumed as known.
	The force vector $\bm{f}_q(\cdot): \mathbb{R}^{14} \to \mathbb{R}^7$, including Coriolis force, gravity and joint friction, is considered as unknown.
	The dynamical system \eqref{eqn_Franka_EL_dynamics} is also written as \eqref{eqn_general_system} with
	\begin{align}
		&\bm{h}(\bm{x}, \bm{u}) = [\dot{\bm{q}}^T, - \bm{u}^T \bm{M}^{-T}(\bm{q})]^T, \\
		&\bm{f}(\bm{x}) = [\bm{0}_{1 \times 7}, - \bm{f}_q^T(\bm{x}) \bm{M}^{-T}(\bm{q})]^T,
	\end{align}
	where the first $7$ dimension of $\bm{f}(\cdot)$ is known to be $0$.
	Gaussian processes are used to model the last $7$ dimension of $\bm{f}(\cdot)$, i.e., $- \bm{M}^{-1}(\bm{q}) \bm{f}_q(\bm{x})$, where the squared exponential kernel with automatic relevance determination (ARD-SE kernel) is chosen for each dimension $f_i(\cdot)$ with $i = 1, \cdots, 7$ in the form of
	\begin{align} \label{eqn_Franka_ARDSE_kernel}
		\kappa_i(\bm{x}, \bm{x}') = \sigma_{f,i}^2 \exp \Big( -\frac{1}{2} \sum\nolimits_{j = 1}^{14} \frac{(x_j - x'_j) ^ 2}{ \sigma_{i,j}^2 } \Big)
	\end{align}
	where $\sigma_{f,i} \in \mathbb{R}_{0,+}$ and $\sigma_{i,j} \in \mathbb{R}_{0,+}$ with $i = 1, \cdots, 7$, $j = 1, \cdots, 14$ are obtained from hyperparameter optimization.
	The data set $\mathbb{D}(\cdot)$ used for GP regression satisfies \cref{assumption_dataset}, where the variances $\sigma_{w,i}^2$ with $i = 1, \cdots, 7$ of measurement noise are also from hyperparameter optimization.
	The computation time is modeled as $\Delta^{\rm{p}}(t) = \Delta^{\rm{u}}(t) = c_{\Delta} | \mathbb{D}(t) |^2$ with $c_{\Delta} = 2.5 \times 10^{-7}$, such that the maximal prediction and update time denote $\bar{\Delta}^{\rm{p}} = \bar{\Delta}^{\rm{u}} = 0.01$ with bounded size of data set $| \mathbb{D}(t) | \le 200$ for any $t \in \mathbb{R}_{0,+}$.

	The control task is to track a joint reference trajectory $\bm{x}_r(\cdot): \mathbb{R}_{0,+} \to \mathbb{X}$, which is composed of a reference joint position $\bm{q}_r(\cdot): \mathbb{R}_{0,+} \to \mathbb{R}^7$ and velocity $\dot{\bm{q}}_r(\cdot)$.
	The desired $\bm{q}_r(\cdot)$ and $\dot{\bm{q}}_r(\cdot)$ are obtained by inverse kinematic from the reference of end effector $\bm{x}_{r,e}(t) = [\bm{r}_e^T(t), \bm{\varphi}_e^T(t)]^T$, where
	\begin{align}
		\bm{r}_e(t) = \begin{bmatrix}
			0.35 + 0.1 \sin(2t) \\ 
			0.1 \cos(2t) \\
			0.6 + 0.01 \sin(20t)
		\end{bmatrix}, &&
		\bm{\varphi}_e(t) = \begin{bmatrix}
			\pi \\ 0 \\ 0
		\end{bmatrix}
	\end{align}
	with Tait–Bryan angle following $x \!-\! y \!-\! z$ sequence.
	The reference trajectory is shown in \cref{figure_Franka_reference}.
	To track the reference joint trajectory $\bm{x}_r(\cdot)$, the control law $\bm{\pi}(\cdot, \cdot, \cdot)$ is designed as	
	\begin{align}
		\bm{\pi}(t, \bm{x}, \bm{f}_x) = - \bm{M}(\bm{q}) ( \bm{K} (\bm{x} - \bm{x}_r(t)) + \ddot{\bm{q}}_r(t) ) - \bm{f}_x,
	\end{align}
	where the control gain $\bm{K} \in \mathbb{R}_{14 \times 7}$ is chosen as $\bm{K} = [-100 \bm{I}_7, -50 \bm{I}_7]$.
	The Lyapunov function is selected as $V(t, \bm{x}(t) - \bm{x}_r(t)) = (\bm{x}(t) - \bm{x}_r(t))^T \bm{P} (\bm{x}(t) - \bm{x}_r(t))$, where $\bm{P} \in \mathbb{R}^{14 \times 14}$ is the unique positive definite solution of Lyapunov equation $\bm{A}^T \bm{P} + \bm{P} \bm{A} = - \bm{I}_{14}$ due to Hurwitz
	\begin{align}
		\bm{A} = \begin{bmatrix}
			\bm{0}_{7 \times 7} & \bm{I}_7 \\
			-100 \bm{I}_7 & -50 \bm{I}_7
		\end{bmatrix}.
	\end{align}
	It is easy to see \cref{assumption_alpha_exp_stable} is satisfied with $\lambda_1 = \underline{\lambda}(\bm{P})$ and $\lambda_2 = \bar{\lambda}(\bm{P})$, where $\underline{\lambda}(\cdot)$ and $\bar{\lambda}(\cdot)$ return the minimal and maximal singular value of a square matrix.
	Moreover, $\lambda$ and $\lambda_{\gamma}$ are written as $\lambda = 1 / (2 \lambda_2)$ and $\lambda_{\gamma} = 2 \| \bm{P} \|^2$.
	The control trigger interval is set as $\varepsilon = 5 \times 10^{-3}$.
	The test is conducted by joint simulation with MatLab and CoppeliaSim \cite{coppeliaSim}, which provides a more practical simulation environment for robotics.
	
	\begin{figure}[t] 
		\centering
		\includegraphics[height=4cm]{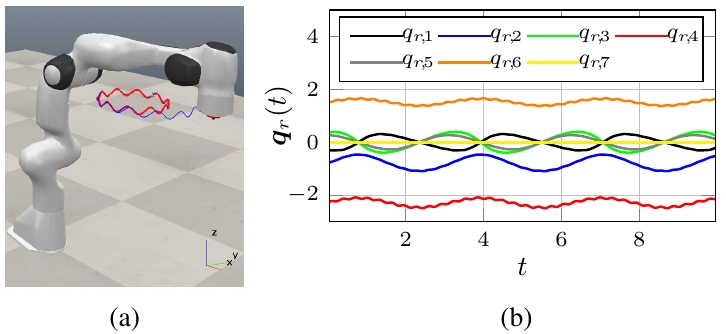}
		\caption{
			a) Franka Emika Panda in its initial position.
			The {\color{red} red} line represents the reference trajectory of the end effector, and the {\color{blue} blue} line is generated by online learning-based control with the proposed asynchronous event-triggered mechanism.
			b) The desired joint reference trajectory $\bm{q}_r = [q_{r,1}, q_{r,2}, q_{r,3}, q_{r,4}, q_{r,5}, q_{r,6}, q_{r,7}]^T$ over time $t$, which is generalized by inverse kinematics.
		}
		\label{figure_Franka_reference}
	\end{figure}
	
	\subsection{Control and Learning Performance}
	
	To demonstrate the effectiveness of the proposed event-trigger in \cref{section_double_trigger}, the following online learning-based control methods are used as baseline for comparison.
	\begin{enumerate}
		\item Time-triggered control with time-triggered online learning (Time Trigger):
		The control inputs are updated at $\{ t^{\rm{u}}_k \}_{k \in \mathbb{N}}$ with $t^{\rm{u}}_{k+1} \!-\! t^{\rm{u}}_k \!=\! \varepsilon$ for all $k \!\in\! \mathbb{N}$, and each pair $\{ \bm{x}(t^{\rm{f}}_p), \bm{y}(t^{\rm{f}}_p) \}$ is added to the data set at $t^{\rm{f}}_p$ for all $p \!\in\! \mathbb{N}$.
		\item Event-triggered control with time-triggered online learning (Control Trigger):
		Apply the event-trigger strategy for control update with $\rho^{\rm{u}}(\cdot)$ in \eqref{eqn_rho_u_exp_stab}, and add each data pair $\{ \bm{x}(t^{\rm{f}}_p), \bm{y}(t^{\rm{f}}_p) \}$ to $\mathbb{D}(t^{\rm{f}}_p)$ for all $k \in \mathbb{N}$.
		\item Time-triggered control with event-triggered online learning (GP Trigger):
		Transmit each control command to the controller via communication channel, and apply the event-triggered online learning with $\rho^{\rm{f}}(\cdot)$ in \eqref{eqn_rho_f_exp_stab}.
		\item Proposed asynchronous event-triggered strategy (Double Trigger):
		The control inputs $\bm{u}(\cdot)$ and GP model in terms of $\mathbb{D}(\cdot)$ are updated following event-triggered mechanism with $\rho^{\rm{u}}(\cdot)$ in \eqref{eqn_rho_u_exp_stab} and $\rho^{\rm{f}}(\cdot)$ in \eqref{eqn_rho_f_exp_stab}.
	\end{enumerate}
	The total simulation time is $10$, and the robot is initialized with $\bm{q}(0) = [0,0,0,-\pi/2,0,\pi/2,0]^T$ and $\dot{\bm{q}}(0) = \bm{0}_{7 \times 1}$.
	
	The control performance reflected by the value of Lyapunov function $V(\cdot)$ and the tracking error $e(\cdot)$ is shown in \cref{figure_Franka_performance_OneComparison}.
	It is obvious to see both $V(\cdot)$ and $e(\cdot)$ are similar among these $4$ methods, which are bounded by $\bar{V}(\cdot)$ and $\bar{e}(\cdot)$ respectively.
	Moreover, the prediction performance using the proposed asynchronous event-trigger is shown in \cref{figure_prediction}.
	The upper part of \cref{figure_prediction} shows that the prediction $\bm{\mu}(\cdot)$ from GP is close to the true unknown function $\bm{f}(\cdot)$, and the prediction error in the bottom part of \cref{figure_prediction} is bounded as shown in \cref{lemma_GP_error_bound}.
	Next, the communication and computation efficiency is investigated, which is reflected by the number of control and online learning trigger events shown in \cref{figure_trigger_instance_Franka}.
	Totally, there are $190$ and $192$ data pairs collected by event-triggered online learning in 3) and 4) among $1182$ and $1179$ prediction times, respectively.
	This observation shows comparable online learning efficiency between 3) and 4), although the proposed event-trigger saves slightly less computational power with $16.29\%$ triggers than 3) with $16.07\%$.
	Considering the trigger event times for control, both 2) and 4) show high efficiency in saving communication resources by triggering $1307$ and $1251$ times among $2000$ potential communication instances.
	Overall, the proposed asynchronous event-trigger achieves similar performance with high efficiencies in both communication and computation.
	
	\begin{figure}[t] 
		\centering
		\includegraphics[width=0.48\textwidth]{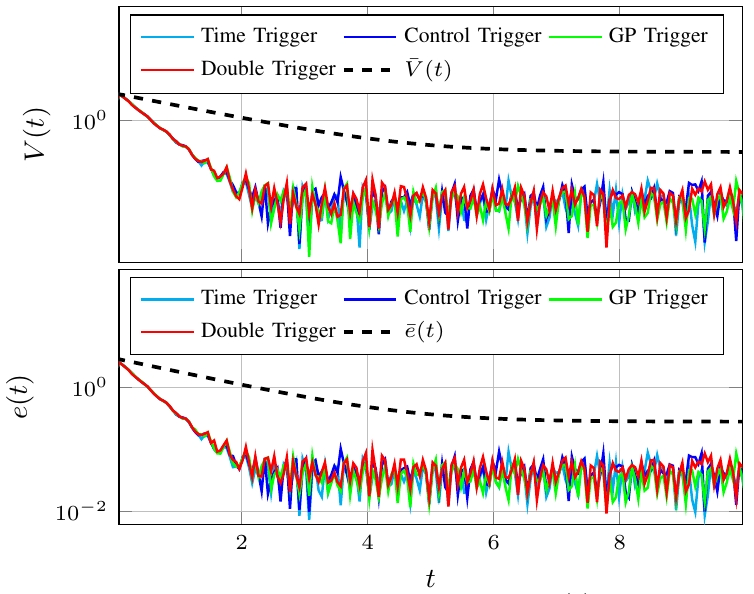}
		\caption{
			The value of Lyapunov function $V(t)$ and tracking error $e(t)$ w.r.t time $t$ from $4$ learning-based control methods.
		}
		\label{figure_Franka_performance_OneComparison}
	\end{figure}
	
	\begin{figure}[t] 
		\centering
		\includegraphics[width=0.48\textwidth]{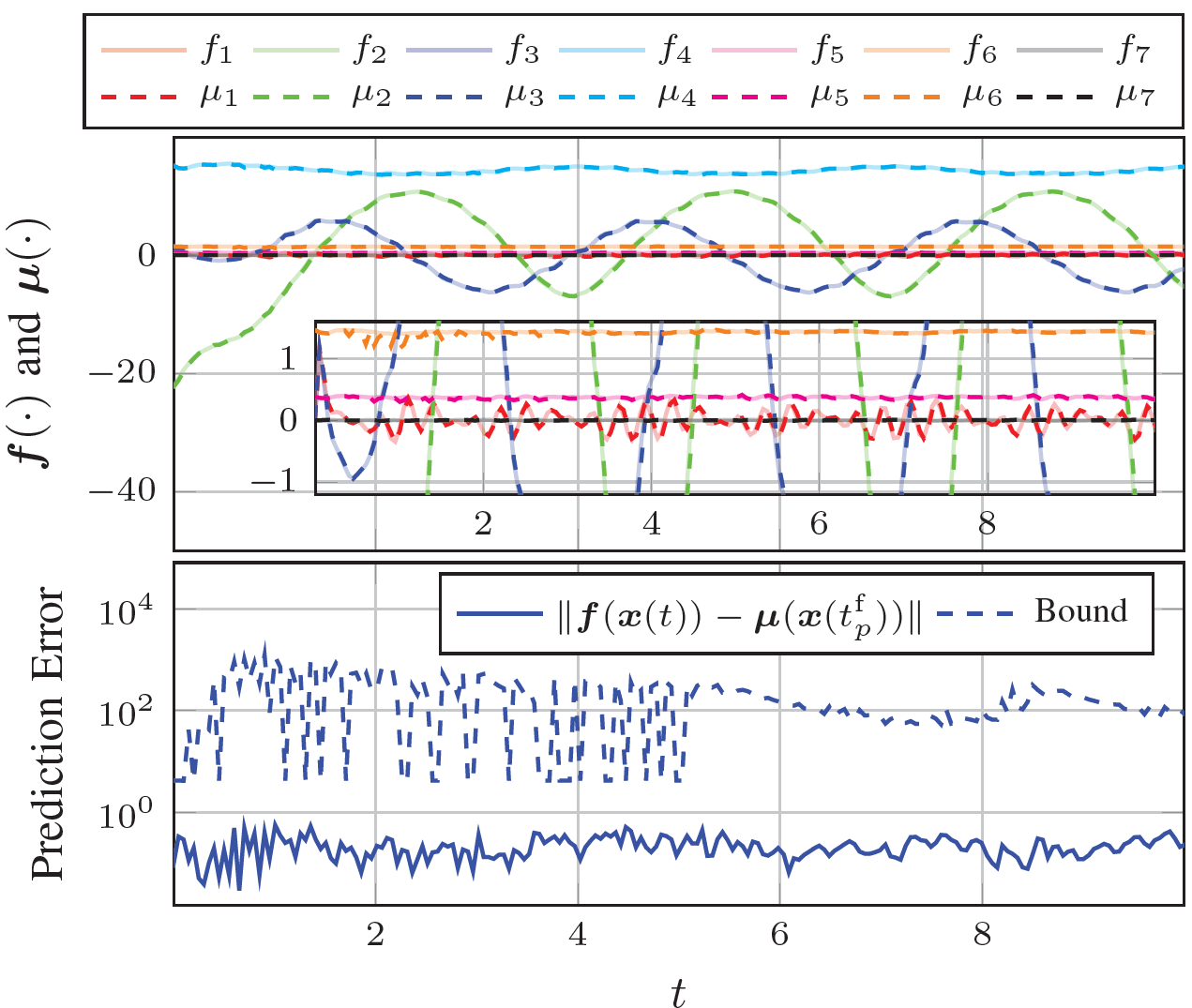}
		\caption{
			Prediction performance by using Double Trigger.
		}
		\label{figure_prediction}
	\end{figure}
	
	\begin{figure}[t] 
		\centering
		\begin{subfigure}{0.5\textwidth}
			\centering
			\includegraphics[width=0.98\textwidth]{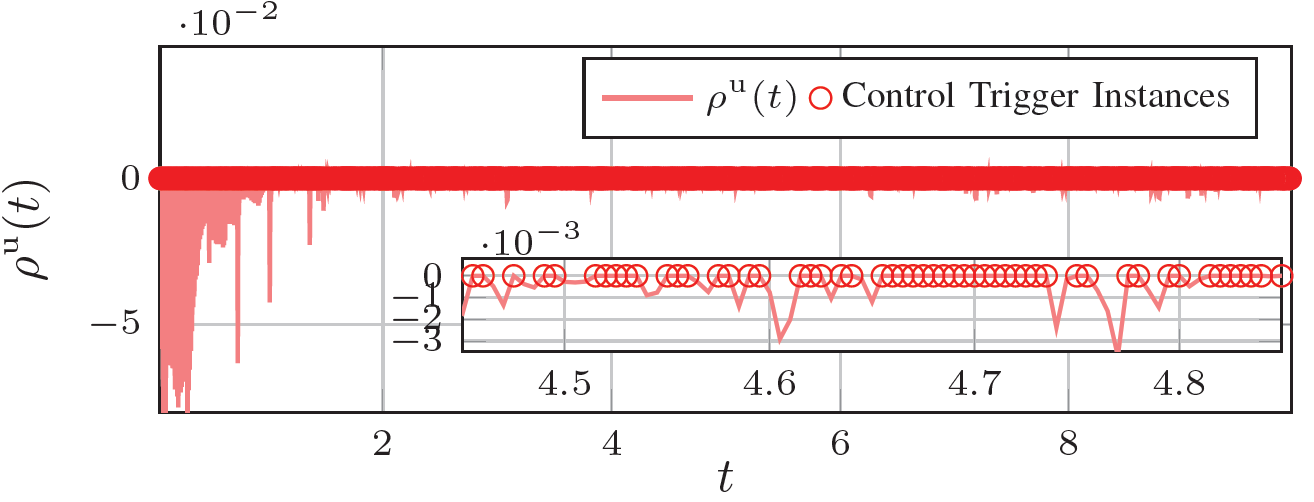}
			\caption{
				Control Trigger for Franka Emika Panda.
			}
		\end{subfigure}
		\begin{subfigure}{0.5\textwidth}
			\centering
			\includegraphics[width=0.98\textwidth]{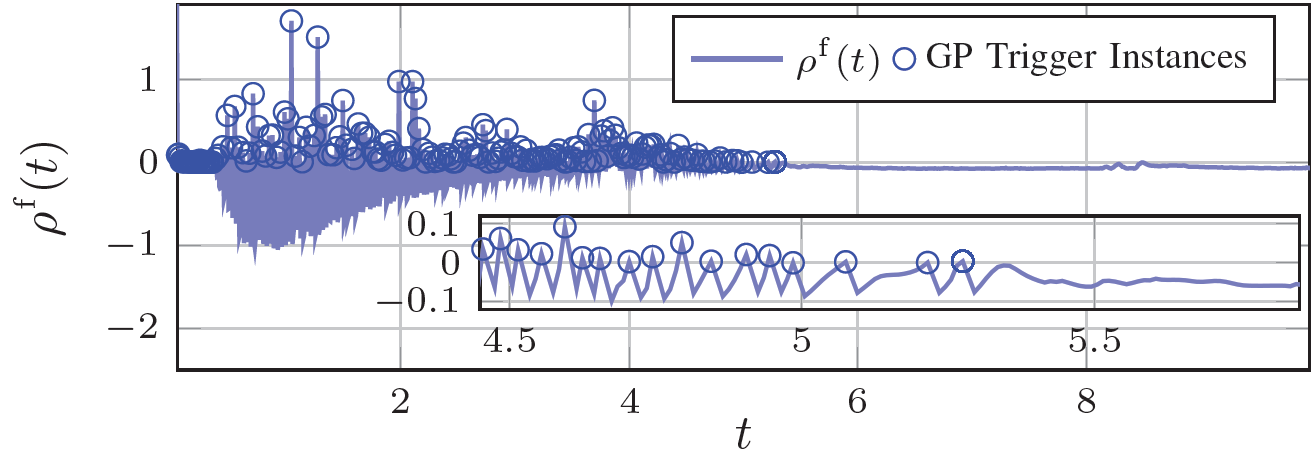}
			\caption{
				GP Trigger for Franka Emika Panda.
			}
		\end{subfigure}
		\begin{subfigure}{0.5\textwidth}
			\centering
			\includegraphics[width=0.98\textwidth]{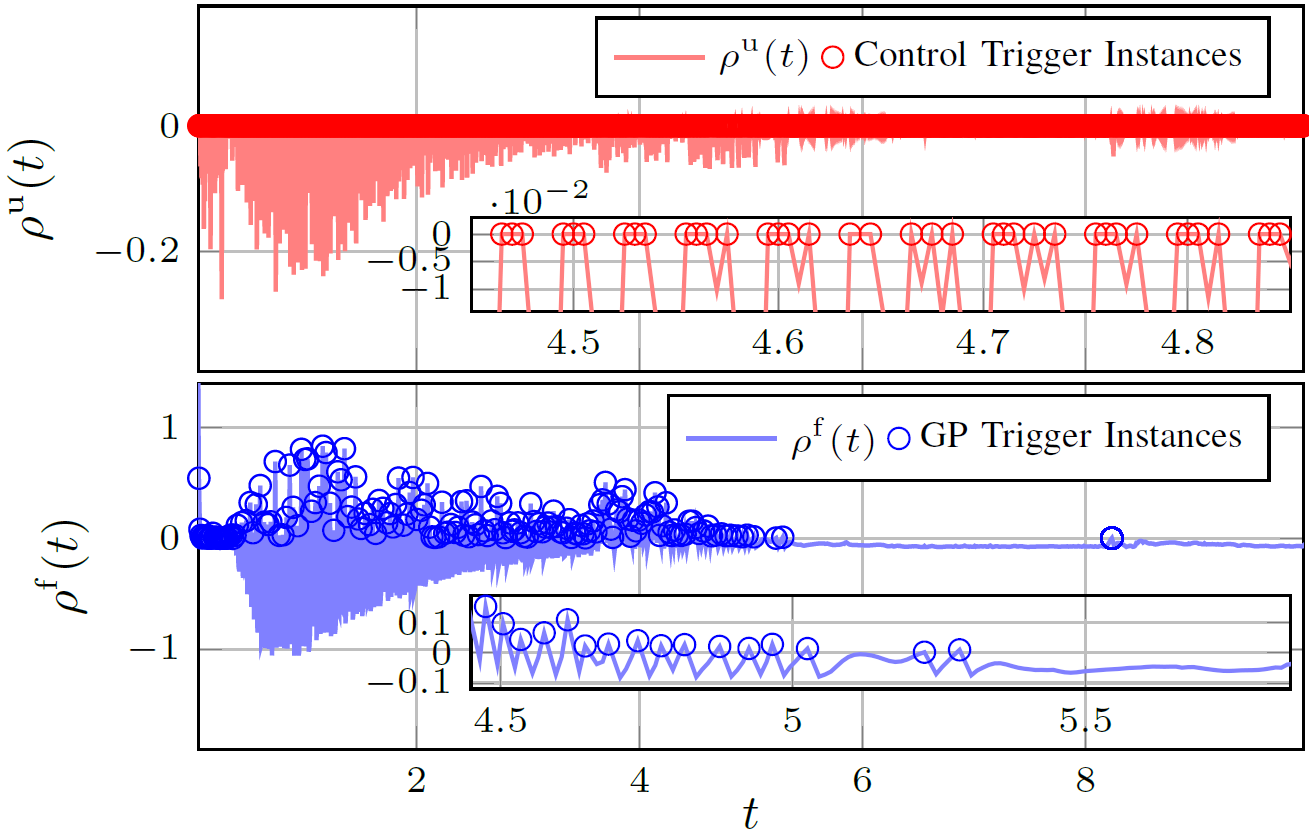}
			\caption{
				Double Trigger for Franka Emika Panda.
			}
		\end{subfigure}
		\caption{
			Value of trigger function $\rho^{\rm{u}}(t)$ and $\rho^{\rm{f}}(t)$ over time $t$ and trigger instances for control update and online learning, respectively.
			It is obvious to see $\rho^{\rm{u}}(t) \le 0$ for any $t \in \mathbb{R}_{0,+}$, indicating $V(t) \le \hat{V}(t)$.
			This also means the control triggers occur at $t$ when $\rho^{\rm{u}}(t) = 0$, i.e., $V(t) = \hat{V}(t)$.
			Due to the existence of computational delay, the value of $\rho^{\rm{f}}(\cdot)$ may greater than $0$.
			However, the online learning triggered when $\rho^{\rm{f}}(\cdot) \ge 0$ ensures $\rho^{\rm{f}}(\cdot) < 0$ after GP model update.
		}
		\label{figure_trigger_instance_Franka}
	\end{figure}
	
	\section{Conclusions}
	\label{section_conclusion}
	
	In this paper, we consider a control problem for unknown systems with online machine learning, where learning-induced computational delay is non-negligible.
	To mitigate the delay effect, an in-network online learning-based control structure is employed.
	The control performance reflected by the tracking error bound is shown under computational delays, allowing for different control and online learning strategy satisfying a specific condition.
	Moreover, a trade-off between communication and computation is shown for a desired performance, and an exemplary strategy with time-trigger is shown to achieve the required condition.
	Furthermore, an asynchronous event-trigger framework is proposed, ensuring the desired performance with higher efficiencies in both communication and computation without the exhibition of Zeno behavior.
	Additionally, the closed form of the trigger conditions is derived for exponentially stabilizable systems, showing that the proposed event-trigger is practically implementable.
	Finally, the effectiveness of the proposed asynchronous event-trigger framework is demonstrated through simulations.
	
	While the proposed method ensures the control performance efficiently, the conservatism in the trigger design induces more as ideal case, which mainly arises from two sources.
	First, the expected control performance $\bar{V}(\cdot)$ is derived under worst-case assumptions on the prediction and delay-induced errors.
	Specifically, the prediction error bound $\underline{\eta}$ considers the single-point GP accuracy, while the delay-induced error term $F\bar{\Delta}$ is based on the largest possible state derivative $\|\dot{\bm{x}}(t)\| \le F$.
	Both factors overestimate the actual uncertainty and lead to conservative triggering thresholds.
	Second, because the computation delay introduces a two-phase effect covering both the computation and utilization intervals, the trigger condition must rely on the worst-case evolution of the Lyapunov function $V(\cdot)$, which further increases conservatism.
	
	Reducing this conservatism is an important direction for future research.
	Possible approaches include using multi-point prediction accuracy bound for $\underline{\eta}$ and using more accurate estimation of $V(\cdot)$ via local Lipschitz constants.
	
	\bibliographystyle{IEEEtran}
	\bibliography{refs_revised}
	
	\def\BiographyDistance{-10pt}
	\vspace{-11pt}
	\begin{IEEEbiography}[{\includegraphics[width=1in,height=1.25in,clip,keepaspectratio]{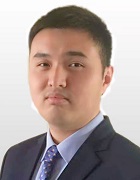}}]{Xiaobing Dai}
		received the B.Sc. mechanical engineering from the Tongji University, Shanghai, China, in 2018 with direction in mechatronics, building environment and civil engineering. 
		He received double M.Sc degrees in Mechanical Engineering, Mechatronics and Robotics from the Technical University of Munich, Munich, Germany, in 2021. Since February 2022, he is a PhD student at the Chair of Information-oriented Control, TUM School of Computation, Information and Technology at the Technical University of Munich, Munich, Germany. His current research interests include efficient online machine learning, networked control systems, safe learning-based control.
	\end{IEEEbiography}
	\vspace{\BiographyDistance}
	
	\begin{IEEEbiography}[{\includegraphics[width=1in,height=1.25in,clip,keepaspectratio]{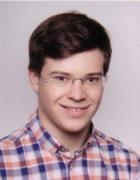}}]{Armin Lederer}
		(Member, IEEE) 
		received the B.Sc. and M.Sc. degree in electrical engineering and information technology from the Technical University of Munich, Munich, Germany, in 2015 and 2018, respectively. From 2018 until 2023, he has been a PhD student at the Chair of Information oriented Control, Department of Electrical and Computer Engineering at the Technical University of Munich, Germany. Since 2023 he is a postdoctoral researcher in the Learning \& Adaptive Systems Group at ETH Zurich, Switzerland. His current research interests include the stability of data-driven control systems and machine learning in closed-loop systems.
	\end{IEEEbiography}
	\vspace{\BiographyDistance}
	
	\begin{IEEEbiography}[{\includegraphics[width=1in,height=1.25in,clip,keepaspectratio]{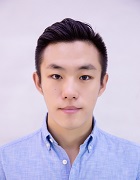}}]{Zewen Yang}
		(Member, IEEE)
		received the M.S. degree in control engineering from Northeast Forest University, in 2017. He pursued a Ph.D. in control science and engineering at College of Intelligent Systems Science and Engineering, Harbin Engineering University, Harbin, China, from 2017 to 2019 and joined the Chair of Information-oriented Control, School of Computation, Information and Technology, the Technical University of Munich, Munich, Germany, in machine learning and data-driven control until 2023. His current research interests include multi-agent systems, cooperative learning, control theory, and general robotics.
	\end{IEEEbiography}
	\vspace{\BiographyDistance}
	
	\begin{IEEEbiography}[{\includegraphics[width=1in,height=1.25in,clip,keepaspectratio]{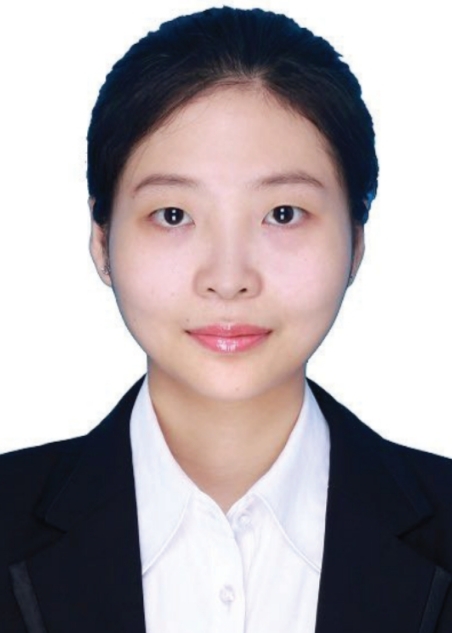}}]{Sihua Zhang}
		received the M.Eng. degree in control engineering from the Beijing Institute of Technology, Beijing, China, in 2021. She is currently pursuing the doctor's degree in control engineering with the School of Automation, Beijing Institute of Technology, Beijing, China. From 2023 to 2025, she is additionally working as research assistant at the Chair of Information-oriented Control, TUM School of Computation, Information and Technology at the Technical University of Munich, Germany. Her current research interests include safety-critical robotic control, control barrier functions, machine learning and optimal control.
	\end{IEEEbiography}
	\vspace{\BiographyDistance}
	
	\begin{IEEEbiography}[{\includegraphics[width=1in,height=1.25in,clip,keepaspectratio]{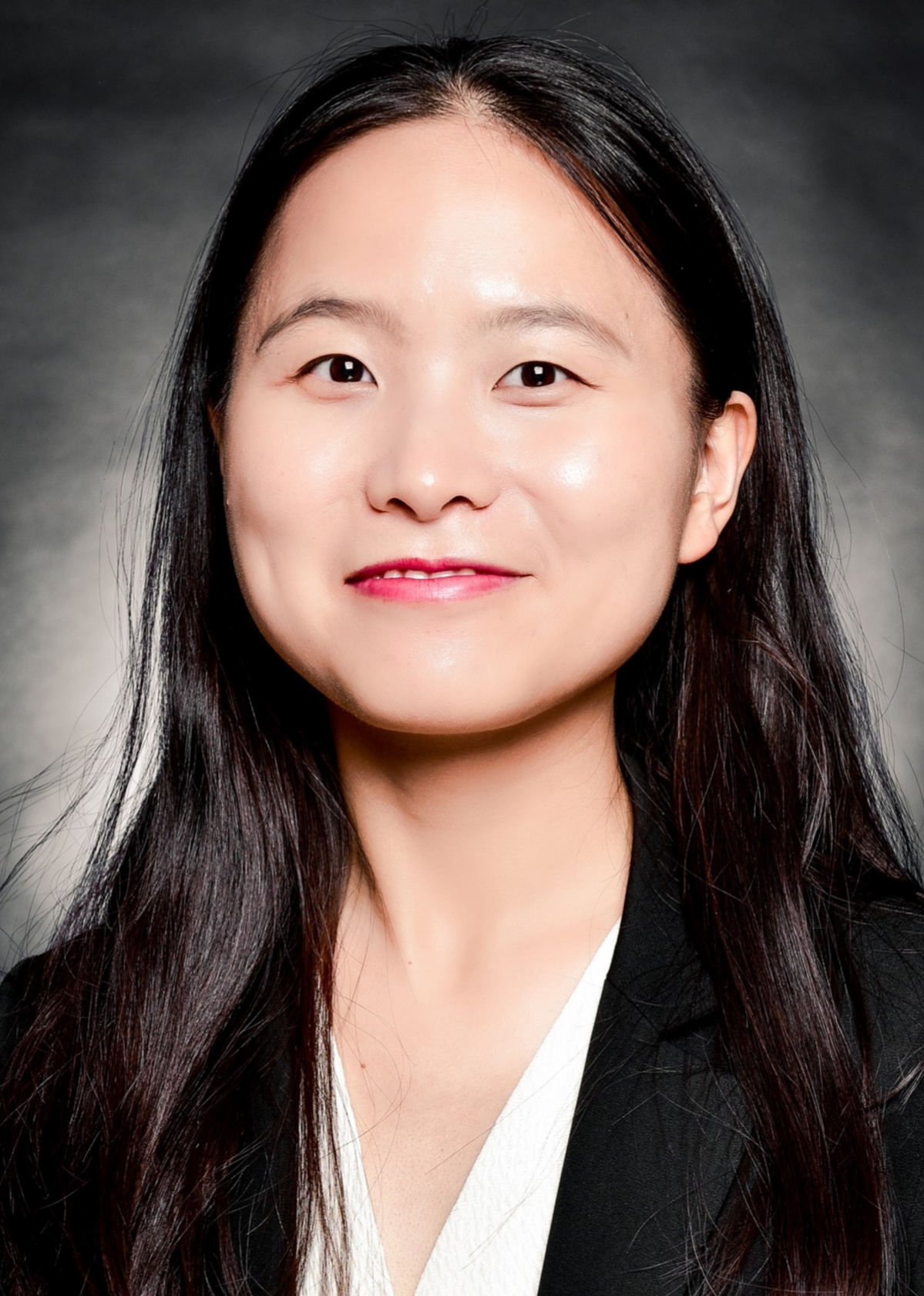}}]{Lu Wan}
		received a B.Sc. in Built Environment and Energy Engineering from Tongji University, China, in 2018, and an M.Sc. in Energy Engineering from the University of Stuttgart, Germany, in 2021. She is currently a corporate researcher at Robert Bosch GmbH, Germany, and pursuing a Ph.D. with the Information Systems in the Built Environment research group at Eindhoven University of Technology. Her research interests include building performance simulation, dynamic modelling of building and HVAC systems, model predictive control for energy systems, IoT, building information modelling, semantic web technologies, and digital twins.
	\end{IEEEbiography}
	\vspace{\BiographyDistance}
	\vfill
	
	\begin{IEEEbiography}[{\includegraphics[width=1in,height=1.25in,clip,keepaspectratio]{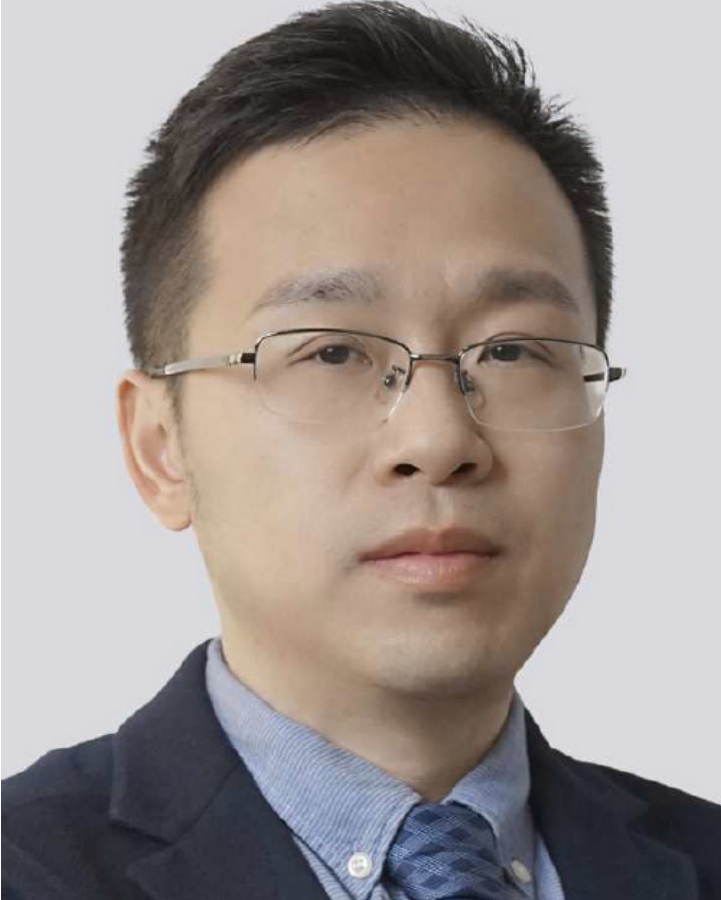}}]{Yang Tang}
		(Fellow, IEEE)
		received the B.S. and Ph.D. degrees in electrical engineering from Donghua University, Shanghai, China, in 2006 and 2010, respectively.
		From 2008 to 2010, he was a Research Associate with The Hong Kong Polytechnic University, Hong Kong. From 2011 to 2015, he was a Postdoctoral
		Researcher with the Humboldt University of Berlin, Berlin, Germany, and the Potsdam Institute for Climate Impact Research, Potsdam, Germany. He is currently a Professor with the East China University of Science and Technology, Shanghai. He has published more than 200 papers in international journals and conferences, including more than 140 papers in IEEE TRANSACTIONS and 20 papers in IFAC journals. His current research interests include distributed estimation/control/optimization, computer vision, reinforcement learning, cyber–physical systems, hybrid dynamical systems, and their applications.
		
		Prof. Tang was a recipient of the Alexander von Humboldt Fellowship. He is a Senior Area Editor of IEEE Transactions on Circuits and Systems—Part I: Regular Papers and an Associate Editor of IEEE Transactions on Neural Networks and Learning Systems, IEEE Transactions on Cybernetics, IEEE Transactions on Industrial Informatics, IEEE/ASME Transactions on Mechatronics, IEEE Transactions on Cognitive and Developmental Systems, IEEE Transactions on Emerging Topics in Computational Intelligence, IEEE Systems Journal, Engineering Applications of Artificial Intelligence (IFAC), Science China Information Sciences, and Automatica Sinica. He has been awarded as the best/outstanding associate editor in IEEE journals for four times. He is a (leading) guest editor for several special issues focusing on autonomous systems, robotics, and industrial intelligence in IEEE Transactions.
	\end{IEEEbiography}
	\vspace{\BiographyDistance}
	
	\begin{IEEEbiography}[{\includegraphics[width=1in,height=1.25in,clip,keepaspectratio]{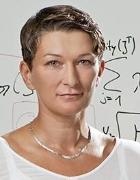}}]{Sandra Hirche}
		(Fellow, IEEE)
		received the Dipl.-Ing degree in aeronautical engineering from the Technical University of Berlin, Berlin, Germany, in 2002, and the Dr. Ing. degree in electrical engineering from the Technical University of Munich, Munich, Germany, in 2005. From 2005 to 2007, she was awarded a Post-doctoral scholarship from the Japanese Society for the Promotion of Science at the Fujita Laboratory, Tokyo Institute of Technology, Tokyo, Japan. From 2008 to 2012, she was an Associate Professor with the Technical University of Munich. Since 2013, she has served as Technical University of Munich Liesel Beckmann Distinguished Professor and has been with the Chair of Information-Oriented Control, Department of Electrical and Computer Engineering, Technical University of Munich. She has authored or coauthored more than 150 papers in international journals, books, and refereed conferences. Her main research interests include cooperative, distributed, and networked control with applications in human--machine interaction, multirobot systems, and general robotics. 
		
		Dr. Hirche has served on the editorial boards of the IEEE Transactions on Control of Network Systems, the IEEE Transactions on Control Systems Technology, and the IEEE Transactions on Haptics. She has received multiple awards such as the Rohde \& Schwarz Award for her Ph.D. thesis, the IFAC World Congress Best Poster Award in 2005, and -- together with students -- the 2018 Outstanding Student Paper Award of the IEEE Conference on Decision and Control as well as Best Paper Awards from IEEE Worldhaptics and the IFAC Conference of Manoeuvring and Control of Marine Craft in 2009.
	\end{IEEEbiography}
	\vspace{\BiographyDistance}
	
	\vfill
	
\end{document}